\documentclass[twoside,11pt]{article}
%
%
\usepackage{microtype}
\usepackage{graphicx}
\usepackage{booktabs} 
\usepackage{hyperref}
\usepackage{caption}
\usepackage{amssymb}
\usepackage{subfig}
\usepackage{mathtools, amsfonts}
\usepackage{makecell}
 \usepackage{wrapfig}
\usepackage{bbm}
\usepackage{xcolor}
\usepackage{xspace}
\usepackage{fullpage}






\newcommand\nc\newcommand
\nc{\bb}[1]{\mathbb{#1}}
\renewcommand{\cal}[1]{\mathcal{#1}}

\interdisplaylinepenalty=2500 

\nc\bfa{{\bf{a}}}\nc\bfA{{\boldsymbol A}}\nc\cA{{\cal A}} \nc\fA[1]{A\br*{#1}} \nc\fa[1]{a\br*{#1}}  \nc\rmA{\mathrm{A}} \nc\rma{\mathrm{a}}
\nc\bfb{{\bf{b}}}\nc\bfB{{\boldsymbol B}}\nc\cB{{\cal B}} \nc\fB[1]{B\br*{#1}} \nc\fb[1]{b\br*{#1}}  \nc\rmB{\mathrm{B}} \nc\rmb{\mathrm{b}}
\nc\bfc{{\bf{c}}}\nc\bfC{{\boldsymbol C}}\nc\cC{{\cal C}} \nc\fC[1]{C\br*{#1}} \nc\fc[1]{c\br*{#1}}  \nc\rmC{\mathrm{C}} \nc\rmc{\mathrm{c}}
\nc\bfd{{\bf{d}}}\nc\bfD{{\boldsymbol D}}\nc\cD{{\cal D}} \nc\fD[1]{D\br*{#1}} \nc\fd[1]{d\br*{#1}}  \nc\rmD{\mathrm{D}} \nc\rmd{\mathrm{d}}
\nc\bfe{{\bf{e}}}\nc\bfE{{\boldsymbol E}}\nc\cE{{\cal E}} \nc\fE[1]{E\br*{#1}} \nc\fe[1]{e\br*{#1}}  \nc\rmE{\mathrm{E}} \nc\rme{\mathrm{e}}
\nc\bff{{\bf{f}}}\nc\bfF{{\boldsymbol F}}\nc\cF{{\cal F}} \nc\fF[1]{F\br*{#1}} \nc\ff[1]{f\br*{#1}}  \nc\rmF{\mathrm{F}} \nc\rmf{\mathrm{f}}
\nc\bfg{{\bf{g}}}\nc\bfG{{\boldsymbol G}}\nc\cG{{\cal G}} \nc\fG[1]{G\br*{#1}} \nc\fg[1]{g\br*{#1}}  \nc\rmG{\mathrm{G}} \nc\rmg{\mathrm{g}}
\nc\bfh{{\bf{h}}}\nc\bfH{{\boldsymbol H}}\nc\cH{{\cal H}} \nc\fH[1]{H\br*{#1}} \nc\fh[1]{h\br*{#1}}  \nc\rmH{\mathrm{H}} \nc\rmh{\mathrm{h}}
\nc\bfi{{\bf{i}}}\nc\bfI{{\boldsymbol I}}\nc\cI{{\cal I}} \nc\fI[1]{I\br*{#1}} \nc\rmI{\mathrm{I}} \nc\rmi{\mathrm{i}}
\nc\bfj{{\bf{j}}}\nc\bfJ{{\boldsymbol J}}\nc\cJ{{\cal J}} \nc\fJ[1]{J\br*{#1}} \nc\fj[1]{j\br*{#1}} \nc\rmJ{\mathrm{J}} \nc\rmj{\mathrm{j}}
\nc\bfk{{\bf{k}}}\nc\bfK{{\boldsymbol K}}\nc\cK{{\cal K}} \nc\fK[1]{K\br*{#1}} \nc\fk[1]{k\br*{#1}} \nc\rmK{\mathrm{K}} \nc\rmk{\mathrm{k}}
\nc\bfl{{\bf{l}}}\nc\bfL{{\boldsymbol L}}\nc\cL{{\cal L}} \nc\fL[1]{L\br*{#1}} \nc\fl[1]{l\br*{#1}} \nc\rmL{\mathrm{L}} \nc\rml{\mathrm{l}}
\nc\bfm{{\bf{m}}}\nc\bfM{{\boldsymbol M}}\nc\cM{{\cal M}} \nc\fM[1]{M\br*{#1}} \nc\fm[1]{m\br*{#1}} \nc\rmM{\mathrm{M}} \nc\rmm{\mathrm{m}}
\nc\bfn{{\bf{n}}}\nc\bfN{{\boldsymbol N}}\nc\cN{{\cal N}} \nc\fN[1]{N\br*{#1}} \nc\fn[1]{n\br*{#1}} \nc\rmN{\mathrm{N}} \nc\rmn{\mathrm{n}}
\nc\bfo{{\bf{o}}}\nc\bfO{{\boldsymbol O}}\nc\cO{{\cal O}} \nc\fO[1]{O\br*{#1}} \nc\fo[1]{o\br*{#1}} \nc\rmO{\mathrm{O}} \nc\rmo{\mathrm{o}}
\nc\bfp{{\bf{p}}}\nc\bfP{{\boldsymbol P}}\nc\cP{{\cal P}} \nc\fP[1]{P\br*{#1}} \nc\fp[1]{p\br*{#1}} \nc\rmP{\mathrm{P}} \nc\rmp{\mathrm{p}}
\nc\bfq{{\bf{q}}}\nc\bfQ{{\boldsymbol Q}}\nc\cQ{{\cal Q}} \nc\fQ[1]{Q\br*{#1}} \nc\fq[1]{q\br*{#1}} \nc\rmQ{\mathrm{Q}} \nc\rmq{\mathrm{q}}
\nc\bfr{{\bf{r}}}\nc\bfR{{\boldsymbol R}}\nc\cR{{\cal R}} \nc\fR[1]{R\br*{#1}} \nc\fr[1]{r\br*{#1}} \nc\rmR{\mathrm{R}} \nc\rmr{\mathrm{r}}
\nc\bfs{{\bf{s}}}\nc\bfS{{\boldsymbol S}}\nc\cS{{\cal S}} \nc\fS[1]{S\br*{#1}} \nc\fs[1]{s\br*{#1}} \nc\rmS{\mathrm{S}} \nc\rms{\mathrm{s}}
\nc\bft{{\bf{t}}}\nc\bfT{{\boldsymbol T}}\nc\cT{{\cal T}} \nc\fT[1]{T\br*{#1}} \nc\ft[1]{t\br*{#1}} \nc\rmT{\mathrm{T}} \nc\rmt{\mathrm{t}}
\nc\bfu{{\bf{u}}}\nc\bfU{{\boldsymbol U}}\nc\cU{{\cal U}} \nc\fU[1]{U\br*{#1}} \nc\fu[1]{u\br*{#1}} \nc\rmU{\mathrm{U}} \nc\rmu{\mathrm{u}}
\nc\bfv{{\bf{v}}}\nc\bfV{{\boldsymbol V}}\nc\cV{{\cal V}} \nc\fV[1]{V\br*{#1}} \nc\fv[1]{v\br*{#1}} \nc\rmV{\mathrm{V}} \nc\rmv{\mathrm{v}}
\nc\bfw{{\bf{w}}}\nc\bfW{{\boldsymbol W}}\nc\cW{{\cal W}} \nc\fW[1]{W\br*{#1}} \nc\fw[1]{w\br*{#1}} \nc\rmW{\mathrm{W}} \nc\rmw{\mathrm{w}}
\nc\bfx{{\bf{x}}}\nc\bfX{{\boldsymbol X}}\nc\cX{{\cal X}} \nc\fX[1]{X\br*{#1}} \nc\fx[1]{x\br*{#1}} \nc\rmX{\mathrm{X}} \nc\rmx{\mathrm{x}}
\nc\bfy{{\bf{y}}}\nc\bfY{{\boldsymbol Y}}\nc\cY{{\cal Y}} \nc\fY[1]{Y\br*{#1}} \nc\fy[1]{y\br*{#1}} \nc\rmY{\mathrm{Y}} \nc\rmy{\mathrm{y}}
\nc\bfz{{\bf{z}}}\nc\bfZ{{\boldsymbol Z}}\nc\cZ{{\cal Z}} \nc\fZ[1]{Z\br*{#1}} \nc\fz[1]{z\br*{#1}} \nc\rmZ{\mathrm{Z}} \nc\rmz{\mathrm{z}}



\DeclareMathOperator{\Var}{var}

\nc\defeq{\coloneqq}




\newcommand{\eps}{\epsilon}



\newcommand\R{{\mathbb R}}

\newcommand\F{{\mathbb F}}

\newtheorem{theorem}{Theorem}
\newtheorem{definition}{Definition}
\newtheorem{lemma}[theorem]{Lemma}
\newtheorem{proposition}[theorem]{Proposition}

\newtheorem{corollary*}[theorem]{Corollary}

\newtheorem{remark}{\indent Remark}

\newenvironment{proof}{\noindent{\bf Proof : \ }}{\hfill$\Box$\par\medskip}

\newenvironment{proofof}[1]{\begin{trivlist} \item {\bf Proof
#1:~~}}
  {\qed\end{trivlist}}
\renewenvironment{proofof}[1]{\par\medskip\noindent{\bf Proof of #1: \ }}{\hfill$\Box$\par\medskip}
\newcommand{\namedref}[2]{\hyperref[#2]{#1~\ref*{#2}}}

\newcommand{\E}{\mathbf{E}}

\newcommand{\Z}{\ensuremath{\mathbb{Z}}}












\newcommand{\supp}[2]{\text{\sc{supp}}(#1, #2)}
\newcommand{\p}[1]{\boldsymbol{#1}}
\newcommand{\ie}{\emph{i.e.},~}

\newcommand{\rappor}{\textsc{Rappor }}
\newcommand{\RR}{\textsc{RR }}
\newcommand{\onehot}{\textsc{1-hot }}
\newcommand{\ham}{\Delta}

\DeclareMathOperator*{\argmin}{arg\,min}

\newcommand{\bpsk}{\Phi}

\providecommand{\email}[1]{\href{mailto:#1}{\nolinkurl{#1}\xspace}}

\newif\ifnotes\notestrue
\ifnotes
\newcommand{\gnote}[1]{\textcolor{blue}{{\bf (GV:} {#1}{\bf ) }} \marginpar{\tiny\bf
             \begin{minipage}[t]{0.5in}
               \raggedright ~
                \end{minipage}}}  
\else
\newcommand{\gnote}[1]{}
\fi



\newcommand{\calm}{\mathcal{M}}

\newcommand{\calx}{\mathcal{X}}


\newcommand{\conv}{\text{\sc{conv}}}




\newcommand{\ignore}[1]{}
\newcommand{\sidenote}[1]{ \marginpar{\tiny\bf
             \begin{minipage}[t]{0.5in}
               \raggedright *
            \end{minipage}}}





\captionsetup{justification=centering}

%
%

\begin{document}

\title{vqSGD: Vector Quantized Stochastic Gradient Descent\footnote{This research is supported in parts by NSF awards CCF 1642658, CCF 1618512, CCF 1909046, and by UMass Center for Data Science.}}

\author{
Venkata Gandikota\thanks{Department of  Electrical Engineering and Computer Science, Syracuse, NY.
Email: \email{gandikota.venkata@gmail.com}.}
\and
Daniel Kane \thanks{Departments of Mathematics and CSE, University of California,  San Diego, CA.
Email: \email{dakane@ucsd.edu}.}
\and
Raj Kumar Maity\thanks{College of Information and Computer Sciences, University of Massachusetts, Amherst, MA. 
Email: \email{rajkmaity@cs.umass.edu}.}
\and
Arya Mazumdar\thanks{College of Information and Computer Sciences, University of Massachusetts, Amherst, MA.
E-mail: \email{arya@cs.umass.edu}}\\
}


\maketitle

\begin{abstract}
In this work, we present a family of vector quantization schemes \emph{vqSGD} (Vector-Quantized Stochastic Gradient Descent) that provide an asymptotic reduction in the communication cost with convergence guarantees in first-order distributed optimization. 
In the process we derive the following fundamental information theoretic fact: $\Theta(\frac{d}{R^2})$ bits are necessary and sufficient (up to an additive $O(\log d)$ term) to describe an unbiased estimator $\p{\hat{g}}(\p{g})$ for any $\p{g}$ in the $d$-dimensional unit sphere, under the constraint that $\|\p{\hat{g}}(\p{g})\|_2\le R$ almost surely.
In particular, we consider a randomized scheme based on the convex hull of a point set, that returns an unbiased estimator of a $d$-dimensional gradient vector with almost surely bounded norm. 
We provide multiple efficient instances of our scheme, that are near optimal, and require only $o(d)$ bits of communication at the expense of tolerable increase in error. The instances of our quantization scheme are obtained using the properties of binary error-correcting codes and provide a smooth tradeoff between the communication and the estimation error of quantization. Furthermore, we show that \emph{vqSGD} also offers automatic privacy guarantees.

\end{abstract}

\section{Introduction}
Recent surge in the volumes of available data has motivated the development of large-scale distributed learning algorithms. 
Synchronous Stochastic Gradient Descent (SGD) is one such learning algorithm widely used to train large models.  
In order to minimize the empirical loss, the SGD algorithm, in every iteration takes a small step in the negative direction of the \emph{stochastic gradient} which is an unbiased estimate of the true gradient of the loss function.

In this work, we consider the data-distributed model of distributed SGD where the data sets are partitioned across various compute nodes. In each iteration of SGD, the compute nodes send their computed local gradients to a parameter server that averages  and updates the global parameters. 
The distributed SGD model is highly scalable, however, with the exploding dimensionality of data and the increasing number of servers  (such as in a Federated learning setup \cite{konevcny2016federated}), communication becomes a \emph{bottleneck} to the efficiency and speed of learning using SGD \cite{chilimbi2014project}.

In the recent years various quantization and sparsification techniques   \cite{acharya2019distributed, qsgd, signsgd, koloskova2019decentralized, mayekar2019ratq, shalev2010trading,dme, atomo, terngrad} have been developed to alleviate the problem of communication bottleneck. Recently, \cite{kalan2019fitting} even showed the effectiveness of gradient quantization techniques for ReLU fitting. 
The goal of the quantization schemes is to efficiently compute either a low precision or a sparse unbiased estimate of the $d$-dimensional gradients.
One also requires the estimates to have a bounded second moment in order to achieve guaranteed convergence. 

Moreover, the data samples used to train the model often contain sensitive information.  
Hence, preserving privacy of the participating clients is crucial. 
Differential privacy \cite{dwork2016calibrating, privacybook} is a mathematically rigorous and  standard notion of privacy considered in both literature and in practice. Informally, it ensures that the information from the released data (e.g. the gradient estimates) cannot be used to distinguish between two \emph{neighboring} data sets.

\paragraph{Our Contribution:}
In this work, we present a family of \emph{privacy-preserving} \emph{vector-quantization} schemes that  
incur low communication costs while providing convergence guarantees. 
In particular, we provide explicit and efficient quantization schemes based on convex hull of specific structured point sets in $\R^d$ that require $O(d\log d/R^2)$ bits  in each iteration of SGD to communicate an unbiased gradient estimate that has  variance bounded above by $R^2$: this is within a $\log d$ factor of the optimal amount of communication that is necessary and sufficient for this purpose. 

At a high level, our scheme is based on the idea that any vector $\p{v} \in \R^d$ with bounded norm can be represented as a convex combination of a carefully constructed point set $C \subset \R^d$. This convex combination essentially allows us to chose a point $\p{c} \in C$ with probability proportional to its coefficient, which makes it an  unbiased estimator of $\p{v}$. 
The bound on the variance is obtained from the circumradius of the convex hull of $C$. 
Moreover, communicating the unbiased estimate is equivalent to communicating the index of $\p{c} \in C$ (according to some fixed ordering) that requires only $\log |C|$ bits.  We provide matching upper and lower bounds on this communication cost.

Large convex hulls have small variation in the coefficients of the convex combination of any two points of bounded norm. This observation allows us to obtain $\eps$-differential privacy (for any $\eps > \eps_0$), where $\eps_0$ depends on the choice of the point set. 
We also propose Randomized Response (RR) \cite{randomresponse} and RAPPOR  \cite{rappor} based mechanisms that can be used over the proposed quantization to achieve $\eps$-differential privacy (for any $\eps > 0$) with small trade-off in the variance of the estimates. 

The family of schemes described above is fairly general and can be instantiated using different structured point sets. The cardinality of the point set bounds the communication cost of the quantization scheme. 
Whereas, the diameter of the point set dictates the variance bounds and the privacy guarantees of the scheme. 

We provide a strong characterization of the point-sets that can be used for our quantization scheme. Using this characterization, we propose construction of point-sets that allow us to attain a smooth trade-off between variance and communication of the quantization scheme. 
We also propose some explicit structured point sets and show tradeoff in the various parameters guaranteed by them.  
While our randomized construction is optimal in terms of communication, the explicit schemes are within $\log d$ factor of a lower bound that we provide. 
Our results\footnote{Note that $\epsilon$ denotes the privacy parameter and $\varepsilon$ refers to the packing parameter of $\varepsilon$-nets.} are summarized in Table~\ref{tab:result-summary}.

\begin{table*}[ht]
\small	
    \centering
    \begin{tabular}{| l |c|c|c|c|}
    \hline 
    Point set  &  Error & \makecell[l]{ Communication (bits) \\ (in each iteration)} & Privacy & Efficiency \\ [1ex]
    \hline 
      \makecell[l]{ Gaussian-Sampling\\
    \small{(Theorem~\ref{thm:random-const})} for any $c > \log(d)$}& $\frac{d}{c N}$ & $N c$ & - & $O(\text{exp}(c))$ \\[1ex]
    \hline
     \makecell[l]{ Reed-Muller ($C_{RM}$)\\
    \small{(Proposition~\ref{prop:RMcode})}}& $\frac{d}{N}$ & $N \log 2d$ & - & $O(d)$ \\[1ex]
    \hline
   \makecell[l]{ Cross-polytope ($C_{cp}$)\\
    \small{(Proposition~\ref{prop:crosspolytope})}  }& $\frac{d }{N}$ & $N \log 2d$ & $\eps > O(\log d)$ & $O(d)$ \\[1ex]
    \hline    
    \makecell[l]{ Scaled $\varepsilon$-Net ($C_{net}$)\\
     \small{(Proposition~\ref{prop:epsnet})} }& $\frac{1}{N}$ & $ O_{\varepsilon}(Nd)\footnotemark$ & - & $O\left( \left(\frac{1}{\varepsilon}\right)^d \right)$ \\[1ex]
    \hline
   \makecell[l]{ Simplex ($C_S$)\\  \small{(Proposition ~\ref{prop:simplex})} }&$\frac{d^2}{N}$    &  $N \log (d+1)$ & $\eps > \log 7$ &$O(d)$ 
   \\[1ex]
   \hline
   \makecell[l]{ Hadamard ($C_H$)\\  \small{(Proposition ~\ref{prop:hadamard})} }&$\frac{d^2}{N}$    &  $N \log d$ & $\eps > \log(1+\sqrt{2})$ &  $O(d)$
   \\[1ex]
   \hline
\makecell[l]{ Cross-polytope ($C_{cp}$) + RR \\  \small{(Theorem ~\ref{thm:quantrr})} }&$\frac{d^2}{N}$    &  $N \log (2d)$ & $\eps > 0$ & $O(d)$
   \\[1ex]
   \hline   
   \makecell[l]{ Cross-polytope ($C_{cp}$) + RAPPOR \\  \small{(Theorem ~\ref{thm:quantrappor})} }&$\frac{d^2}{N}$    &  $2Nd$ & $\eps > 0$ & $O(d)$
   \\[1ex]
   \hline   
    \end{tabular}
    \caption{List of results. ($N$: number of worker nodes, $d$: dimension).}
    \label{tab:result-summary}
\end{table*}
\footnotetext{$O_{\varepsilon}$ hides terms involving $\varepsilon$}

Empirically we compare our quantization schemes to the state-of-art schemes \cite{qsgd,dme}.  
We observe that our cross-polytope vqSGD, performs equally well in practice, while providing asymptotic reduction in the communication cost. The communication cost of our schemes with the state of the art results are compared in Table~\ref{tab:nonprivateq}.

While differential privacy for gradient based algorithms \cite{abadi2016deep,shokri2015privacy} were considered earlier in literature, cpSGD
\cite{cpsgd} is the only work that considered achieving differential privacy for gradient based algorithms and simultaneously minimizing the gradient communication cost.  
The authors propose a binomial mechanism to add discrete noise to the quantized gradients to achieve communication-efficient $(\eps, \delta)$-differentially private gradient descent with convergence guarantees. 
The quantization schemes used are similar to those presented in \cite{dme} and hence  require $\Omega(d)$ bits of communication per compute node in each iteration of SGD. The parameters of the binomial noise are dictated by the required privacy guarantees which in turn controls  the communication cost. 

In this work we show that certain instantiations of our quantization schemes are $\eps$-differentially private. Note that this is a much stronger privacy notion than $(\eps, \delta)$-privacy. Moreover, we get this privacy guarantee directly from the quantization schemes and hence the communication cost remains sublinear ($\log d$) in dimension. 
This in turn, resolves the following question posed in \cite{cpsgd}:  \emph{``While the above schemes (QGSD \cite{qsgd}, DME \cite{dme}) reduce the communication cost, it is unclear what (if any) privacy guarantees they offer.''}. 

We also propose a 
Randomized Response \cite{randomresponse} based private-quantization scheme that requires $O(\log d)$ bits of communication per compute node to get an $\eps$-differential privacy in every iteration while losing a factor of $O(d)$ in convergence rate. 
Table~\ref{tab:privateeq} compares the guarantees provided by our private quantization schemes with the results of cpSGD \cite{cpsgd}. Similar to the results of \cite{cpsgd}, the privacy guarantees listed in the tables hold for \emph{each-iteration} of SGD. The privacy guarantees over multiple iterations of SGD can be obtained using advanced composition theorem \cite{dwork2010boosting}.


\paragraph{Organization:}
In Sec.~\ref{sec:rel}, we describe some other related work on communication efficiency in the federated learning setup. We start  in Sec.~\ref{sec:bkgrnd} describing the settings for our results. The vqSGD quantization scheme is presented in Sec.~\ref{sec:quant}. In Sec.~\ref{sec:constructions} we provide a handle to test whether a point-set is a valid vqSGD scheme, and prove existence of a point-set  that achieves a communication cost equal to the dimension divided by the variance, which matches a lower bound we prove. We provide a few structured deterministic constructions of point sets in this section as well. Sec.~\ref{sec:private} emphasizes the privacy component of vqSGD - and derives the privacy parameters of several vqSGD schemes. Finally, we provide some experiments to support vqSGD in Sec.~\ref{sec:expe}. 

\begin{table*}[ht]
\centering
    \begin{tabular}{|l|c|c|}
    \hline 
    Method &  Error & \makecell[l]{ Communication (bits) \\ (in each iteration)} \\ 
    \hline 
 \makecell[l]{ QSGD \cite{qsgd}} &  $\min\{\frac{d}{s^2},\frac{\sqrt{d}}{s}\} \frac{1}{N}$    & $Ns(s+\sqrt{d})$   \\[1ex]
    \hline
    \makecell[l]{DME \cite{dme} }      & $\min\{\frac{1}{Ns}, \frac{\log d}{N (s-1)^2} \}$  & $Nsd$
      \\[1ex]
    \hline
    \makecell[l]{ vqSGD $Q_{C_{cp}}$}   & $\frac{d}{Ns}$ & $Ns\log d$ \\[1ex]
    \hline
     vqSGD Gaussian  & $\frac{d}{Nsc}$ & $Ns c$ \\[1ex]
    \hline
    \end{tabular}
\caption{Comparison of non private quantization schemes. \\($N$: number of worker nodes, $s$: tuning parameter $\ge 1$, $c > \log d$)) }
\label{tab:privateeq}
\end{table*}

\begin{table*}[ht]
    \centering
        \begin{tabular}{|l|c|c|l|}
    \hline 
    Method & Error & \makecell[l]{ Communication (bits) \\ (in each iteration)} & DP $(\epsilon)$\\
    \hline
      \makecell[l]{cpSGD  \cite{cpsgd}} &  $O_{\delta}\left(\frac{d}{N}\right)\footnotemark$ &  $O_{\delta}(d) $     &     \makecell[l]{$ \delta > 0$,\\ $\epsilon > f(\delta)$}    \\[1ex]
    \hline
  vqSGD $Q_{C_{cp}}$  & $O\left(\frac{d}{N}\right)$  &  $O\left(\log d\right)$     & $\eps > O(\log d)$      \\[1ex]
    \hline
    vqSGD $Q_{C_S}$  & $O\left(\frac{d^2}{N}\right)$  &   $O\left(\log d\right)$     &  $\eps > \log 7$  \\[1ex]
    \hline 
    vqSGD $Q_{C_H}$  &  $O\left(\frac{d^2}{N}\right)$  &    $O\left(\log d\right)$     &  $\eps > \log (2.5)$   \\[1ex]
    \hline 
      \makecell[l]{vqSGD $Q_{C_{cp}}$ + \RR}  &  $O\left(\frac{d^2}{N}\right)$  &    $O\left(\log d\right)$    &  $\epsilon > 0$    \\[1ex]
    \hline 
\end{tabular}
\caption{Comparison of private quantization schemes.}
 \label{tab:nonprivateq} 
\end{table*}
\footnotetext{$O_{\delta}$ hides terms involving $\delta$.}

\section{Related Work}\label{sec:rel}

 The foundations of gradient quantization was laid by \cite{seide20141} and  \cite{strom2015scalable} with schemes that require the compute nodes to send exactly  $1$-bit per coordinate of the gradient in each iteration of SGD. They also suggested using local error accumulation to correct the global gradient in every iteration. 
While these novel techniques worked well in practice, there were no theoretical guarantees provided for convergence of the scheme. 
These seminal works fueled multiple research directions. 

{\bf Quantization \& Sparsification:} \cite{qsgd,atomo, terngrad} propose stochastic quantization techniques to represent each coordinate of  the gradient using small number of bits. The proposed schemes always return an unbiased estimator of the local gradient and require $c = \Omega(\sqrt{d})$ bits of communication in each iteration of SGD (per compute node) to compute the global gradient with variance bounded by a multiplicative factor of $O(d/c)$.  
The quantization techniques for distributed SGD, can be used in the more general setting of communication efficient distributed mean estimation problem, which was the focus of \cite{dme}. The quantization schemes proposed in \cite{dme} require $O(d)$ bits of communication per compute node to estimate the global mean with a constant (independent of $d$) squared error (variance). 
Even though the tradeoff between communication and accuracy achieved by the above mentioned schemes are near optimal \cite{mjordan}, they were unable to break the $\sqrt{d}$ barrier of communication cost (in each iteration per compute node). Moreover, the schemes proposed in \cite{qsgd, dme} are variable length codes that achieve low communication in expectation. The worst case communication cost could be higher. 
In a parallel work \cite{mayekar2019ratq}, the authors propose an efficient fixed-length quantization scheme that achieves near-optimal convergence with $T$-rounds of SGD. However, the goal of their work is different from ours, and the methodologies are different as well.

In this work, we propose (fixed length) quantization schemes that require $o(d)$ (as low as $\log d$) bits of communication and are almost optimal as well. In fact for any $c$-bits of communication, the quantization scheme with Gaussian points achieves a variance of $O(d/c)$ that meets the lower bounds for any unbiased quantization scheme (also shown in the current work). 

Gradient sparsification techniques with provable convergence (under standard assumptions) were studied in \cite{acharya2019distributed, alistarh2018convergence,ivkin2019communication,stich2018sparsified}. 
The main idea in these techniques is to communicate only the top-$k$ \emph{components} of the $d$-dimensional local gradients that can be accumulated globally to obtain a good estimate of the true gradient. 
Unlike the quantization schemes described above, gradient sparsification techniques can achieve $O(\log d)$ bits of communication, but are not usually unbiased estimates of the true gradients. 
\cite{shalev2010trading} suggest randomized sparsification schemes that are unbiased, but 
are not known to provide any theoretical convergence guarantees in very low sparsity regimes. 


See Table~\ref{tab:nonprivateq} for a comparison of our results with the state of the art quantization schemes.

{\bf Error Feedback:} Many works focused on providing techniques to reduce the error incurred due to quantization \cite{diana19, karimireddy2019error} using locally accumulated errors. 
In this work, we focus primarily on gradient quantization techniques, and note that the variance reduction techniques of \cite{diana19} can be used on top of the proposed quantization schemes.

\section{Preliminaries}\label{sec:bkgrnd}
Let $[n]$ denote the set $\{1,2,\ldots ,n\}$ and let $\p{1}_d$, $\p{0}_d$ denote the all $1$'s vector and all $0$'s vector in $\R^d$ respectively.
For any $\p{x} , \p{y} \in \R^d$, we denote the Euclidean ($\ell_2$) distance between them as $\| \p{x} - \p{y} \|_2$. For any vector $\p{x} \in \R^d$,  $x_i$ denotes its $i$-th coordinate.
For any $\p{c} \in \R^d$, and $r >0$, let $B_d(\p{c}, r)$ denote a $d$-dimensional $\ell_2$ ball of radius $r$ centered at $\p{c}$. Also, let $S^{d-1}$ denote the unit sphere about $\p{0}_d$. 
Let $\p{e_i} \in \R^d$ denote the $i$-th standard basis vector which has $1$ in the $i$-th position and $0$ everywhere else. Also, for any prime power $q$, let $\mathbb{F}_q$ denote a finite field with $q$ elements.

For a discrete set of points $C \subset \R^d$, let $\conv(C)$ denote the convex hull of points in $C$, \ie
$\conv(C) := \left\{ \sum_{\p{c} \in C} a_c \p{c} \mid a_c \ge 0, \sum_{\p{c} \in C} a_c = 1 \right\}$.

Suppose $\p{w} \in \R^d$ be the parameters of a function to be learned  (such as weights of a neural network). In each step of the SGD algorithm,  the parameters are updated as $\p{w} \leftarrow \p{w} - \eta \p{\hat{g}}$, where $\eta$ is a possibly time-varying learning rate and $\p{\hat{g}}$ is a stochastic unbiased estimate of $\p{g}$, the true gradient of some loss function with respect to $\p{w}$. The assumption of unbiasedness is crucial here, that implies $\E \p{\hat{g}} = \p{g}$. 

The goal of any gradient quantization scheme is to reduce cost of communicating the gradient, i.e., to act as an first-order oracle, while not compromising too much on the \emph{quality} of the gradient estimate. 
The quality of the gradient estimate is measured in terms the convergence guarantees it provides.
In this work, we will develop a scheme that is an {\em almost surely bounded oracle} for gradients, i.e., $ \|\p{\hat{g}}\|^2_2 \le B$ with probability 1, for some $B>0$.  The convergence rate of the SGD algorithm for any convex function $f$ depends on the upper bound of the norm of the unbiased estimate, i.e., $B$, cf. any standard textbook such as  \cite{shalev2014understanding}.


Although we provide an almost surely bounded oracle as our quantization scheme, previous quantization schemes, such as \cite{qsgd}, provides a {\em mean square bounded oracle}, i.e., an unbiased estimate $\p{\hat{g}}$ of $\p{g}$ such that $\E\|\p{\hat{g}}\|_2^2 \le B$ for some $B>0$. It is known that, even with a mean square bounded oracle, SGD algorithm for a convex function converges with dependence on the upper bound $B$ (see \cite{bubeck2017convex}). As discussed in \cite{qsgd}, one can also consider the variance of $\p{\hat{g}}$ without any palpable difference in theory or practice. Therefore, below we consider the variance of the estimate $\p{\hat{g}}$ as the main measure of error.

In distributed setting with $N$ worker nodes, let $\p{g_i}$ and $\p{\hat{g_i}}$ are the local true gradient and its unbiased estimate  computed at the $i$th compute node for some $i \in \{1, \dots, N\}$. For $\p{g} =\frac{1}{N}\sum_i \p{g_i}$, the variance  of the estimate $\p{\hat{g}} = \frac{1}{N}\sum_i \p{\hat{g_i}}$ is  defined as 
$$
\text{Var}(\p{\hat{g}}) := \E \left[ \| \frac{1}{N}\sum_{i=1}^N\p{g_i} -\frac{1}{N}\sum_{i=1}^N\p{\hat{g_i}} \|_2^2 \right] 
= \frac{1}{N^2} \sum_{i=1}^N\E\left[ \| \p{g_i} - \p{\hat{g_i}} \|_2^2  \right].
$$
In this work, our goal is to design quantization schemes to efficiently compute unbiased  estimate  $\p{\hat{g}_i}$ of $\p{g}_i$ such that $\text{Var}(\p{\hat{g}})$ is minimized. 

For the privacy preserving gradient quantization schemes, we consider the standard notion of $(\eps, \delta)$-differential privacy (DP) as defined in \cite{privacybook}. 
Consider data-sets from a domain $\calx$.  Two data-sets $U, V \in \calx$, are \emph{neighboring} if they differ in at most one data point. 

\begin{definition}\label{def:privacy}
A randomized algorithm $\calm$ with domain $\calx$ is $(\eps, \delta)$-differentially private (DP) if for all $S \subset \text{Range}(\calm)$ and for all neighboring data sets $U, V \in \calx$, 
\[
\Pr[\calm(U) \in S ] \le e^{\eps} \Pr[ \calm(V) \in S] + \delta,
\]
where, the probability is over the randomness in $\calm$. If $\delta=0$, we say that $\calm$ is $\eps$-DP.
\end{definition}

We will need the notion of an $\varepsilon$-nets subsequently.
\begin{definition}[$\varepsilon$-net]\label{def:epsnet}
A set of points $N(\varepsilon) \subset {\cal S}^{d-1}$ is an $\varepsilon$-net for the unit sphere ${\cal S}^{d-1}$ if for any point $\p{x} \in {\cal S}^{d-1}$ there exists a net point $\p{u} \in N(\varepsilon)$ such that $\|\p{x}-\p{u}\|_2 \le \varepsilon$.
\end{definition}
There exist various constructions for $\varepsilon$-net over the unit sphere in $\R^d$ of size at most $\left( 1+2/\varepsilon\right)^d$ \cite{cohen1997covering}. 

\begin{definition}[Hadamard Matrix]\label{def:hadamard}
A Hadamard matrix $H_n$ of order $n$ is a $n \times n$ square matrix with entries from $\pm 1$ whose rows are mutually orthogonal. Therefore, it satisfies $HH^T = nI_n$, where $I_n$ is the $n \times n$ identity matrix. 
\end{definition}
Sylvester's construction \cite{georgiou2003hadamard} provides a recursive technique to construct Hadamard matrices for orders that are powers of $2$ which can be defined as follows. Let $H_1 = [1]$ be the Hadamard matrix of order $2^0$, and let $H_p$ denote the Hadamard matrix of order $2^p$, then a Hadamard matrix of order $2^{p+1}$ can be constructed as 
\[
H_{p+1} = \begin{bmatrix}
H_p & H_p \\
H_p & -H_p
\end{bmatrix}
\]

\section{Quantization Scheme}\label{sec:quant}
We first present our quantization scheme in full generality. Individual quantization schemes with different tradeoffs are then obtained as specific instances of this general scheme. 


Let $C = \{\p{c_1}, \dots, \p{c}_{m}\} \subset \R^d$ be a discrete set of points such that its convex hull, $\conv(C)$ satisfies
\begin{equation}\label{cond:hull}
B_d(\p{0}_d,1) \subset \conv(C)
\subseteq B_d(\p{0}_d,R),  R > 1. 
\end{equation}
Let $\p{v} \in B_d(\p{0}_d,1)$. Since $B_d(\p{0}_d,1) \subseteq \conv(C)$, we can write $\p{v}$ as a convex linear combination of points in $C$. Let 
$\p{v} = \sum_{i=1}^{m} a_i \p{c_i}, 
\mbox{ where } 
a_i \ge 0, \sum_{i=1}^{m} a_i = 1.$
We can view the coefficients of the convex combination 
$(a_1, \ldots, a_{m})$ 
 as a probability distribution 
over points in $C$. 
Define the quantization of  $\p{v}$ with respect to the set of points $C$ as follows:
\[
Q_C(\p{v}) := \p{c_i} \mbox{ with probability } a_i
\]
It follows from the definition of the quantization that $Q_C(\p{v})$ is an unbiased estimator of $\p{v}$.
\begin{lemma}\label{lem:unbiased}
$\E[Q_C(\p{v})] = \p{v}$.
\end{lemma}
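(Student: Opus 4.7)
The plan is to prove this by a direct calculation from the definitions, since the construction of $Q_C$ is engineered precisely so that its expectation equals $\p{v}$. First I would note that the coefficients $(a_1,\ldots,a_m)$ exist because of Condition~\eqref{cond:hull}: since $\p{v}\in B_d(\p{0}_d,1)\subseteq \conv(C)$, we can write $\p{v}=\sum_{i=1}^{m} a_i \p{c_i}$ with $a_i\ge 0$ and $\sum_i a_i=1$. Fix any such decomposition (it need not be unique, but the argument below works for whichever one is chosen to define the quantizer).

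Next I would verify that $(a_1,\ldots,a_m)$ is a valid probability distribution over the indices of $C$, which is immediate from the constraints $a_i\ge 0$ and $\sum_i a_i=1$. Consequently $Q_C(\p{v})$ is a well-defined $\R^d$-valued random variable taking the value $\p{c_i}$ with probability $a_i$.

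Finally, applying the definition of expectation for a discrete random variable gives
\[
\E[Q_C(\p{v})] \;=\; \sum_{i=1}^{m} a_i \cdot \p{c_i} \;=\; \p{v},
\]
where the last equality is exactly the convex combination expressing $\p{v}$. There is no real obstacle here; the only subtlety worth flagging in the write-up is that the decomposition $\{a_i\}$ may not be unique, so strictly speaking the quantizer $Q_C$ depends on a choice of such coefficients, but unbiasedness holds for every valid choice.
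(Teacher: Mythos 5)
Your proposal is correct and follows essentially the same one-line computation as the paper: $\E[Q_C(\p{v})] = \sum_i a_i \p{c_i} = \p{v}$ by the definition of the quantizer and the chosen convex decomposition. Your extra remarks on the existence of the coefficients and their possible non-uniqueness are fine but not needed for the result.
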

\begin{proof}
$\E[Q_C(\p{v})]  = \sum_{i=1}^{|C|} a_i \cdot \p{c_i} = \p{\tilde{v}} = \p{v}.$
\end{proof}

We assume that $C$ is fixed in advance and is known to the compute nodes and the parameter server. 

\begin{remark}
Communicating the quantization of any vector $\p{v}$, amounts to sending a floating point number $\|\p{v}\|_2$, and the index of point $Q_C(\p{v})$ which requires $\log |C|$ bits. 
For many loss functions, such as Lipschitz functions, the bound on the norm of the gradients is known to both the compute nodes and the parameter server. In such settings we can avoid sending $\|v\|_2$ and the cost of communicating the gradients is then exactly $\log |C|$ bits.
\end{remark}

Any point set $C$ that satisfies Condition~\eqref{cond:hull} gives the following bound on the variance of the quantizer. 
\begin{lemma}\label{lem:hull}
Let $C \subset \R^d$ be a point set satisfying Condition~\eqref{cond:hull}. For any $\p{v} \in B_d(\p{0}_d,1)$, let $\p{\hat{v}} := Q_C(\p{v})$.  Then, $\|\p{\hat{v}}\|_2^2 \le R^2$ almost surely, and
$\E \left[ \| \p{v} - \p{\hat{v}} \|_2^2 \right] \le  R^2$.
\end{lemma}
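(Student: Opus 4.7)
The plan is to prove both claims by essentially reading them off from the inclusion $\conv(C) \subseteq B_d(\p{0}_d, R)$ together with the unbiasedness lemma (Lemma~\ref{lem:unbiased}), so no heavy machinery is needed.

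First I would dispense with the almost sure bound. The random variable $\p{\hat{v}} = Q_C(\p{v})$ is supported on the finite set $C$. Since trivially $C \subseteq \conv(C)$, Condition~\eqref{cond:hull} gives $C \subseteq B_d(\p{0}_d, R)$, so every realization $\p{c_i}$ of $\p{\hat{v}}$ satisfies $\|\p{c_i}\|_2 \le R$. Hence $\|\p{\hat{v}}\|_2^2 \le R^2$ with probability one.

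For the variance bound, I would expand the squared error and use the standard bias--variance identity. Writing
\[
\E\!\left[\|\p{v} - \p{\hat{v}}\|_2^2\right] = \E\!\left[\|\p{\hat{v}}\|_2^2\right] - 2\,\langle \p{v}, \E[\p{\hat{v}}]\rangle + \|\p{v}\|_2^2,
\]
and invoking $\E[\p{\hat{v}}] = \p{v}$ from Lemma~\ref{lem:unbiased}, the cross term collapses to $-2\|\p{v}\|_2^2$, giving
\[
\E\!\left[\|\p{v} - \p{\hat{v}}\|_2^2\right] = \E\!\left[\|\p{\hat{v}}\|_2^2\right] - \|\p{v}\|_2^2 \le R^2 - \|\p{v}\|_2^2 \le R^2,
\]
where the penultimate inequality uses the almost-sure bound just established (applied under expectation), and the last uses $\p{v} \in B_d(\p{0}_d, 1)$ (in fact we only need $\|\p{v}\|_2 \ge 0$).

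There isn't really a hard part here: the only subtlety is noticing that the condition $\conv(C) \subseteq B_d(\p{0}_d, R)$ is what transfers to the extreme points of $C$, and that the bias--variance cancellation turns unbiasedness into an exact formula for the mean-squared error rather than just an upper bound. If desired, one could strengthen the conclusion to $\E\|\p{v}-\p{\hat{v}}\|_2^2 \le R^2 - \|\p{v}\|_2^2$, which is what the calculation actually yields and which may be useful later when $\p{v}$ is far from the origin.
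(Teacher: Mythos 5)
Your proposal is correct and follows essentially the same route as the paper: the almost-sure bound from $C \subseteq \conv(C) \subseteq B_d(\p{0}_d,R)$, and the identity $\E\|\p{v}-\p{\hat{v}}\|_2^2 = \E\|\p{\hat{v}}\|_2^2 - \|\p{v}\|_2^2$ via unbiasedness, bounded by $R^2$. Your remark that the calculation actually yields the sharper bound $R^2 - \|\p{v}\|_2^2$ is also noted in the paper, in the remark following the lemma.
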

\begin{proof}
From the definition of the quantization function,
\begin{align*}
\E[ \|\p{v} - Q_C(\p{v}) \|_2^2] &=  \E[\|Q_C(\p{v})\|^2]  - \|\p{v}\|^2 \le  R^2.
\end{align*}
This is true as $C$ satisfies Condition~\eqref{cond:hull} and therefore, each point $\p{c_i} \in C$ has a bounded norm, $\|\p{c_i}\| \le R$.  
\end{proof}

\begin{remark}
If, for any vector $\p{v}$, we send the floating point number $\|\p{v}\|_2$ separately, instead of there being a known upper bound on gradient, we can just assume without loss of generality that $\p{v} \in S^{d-1}$. 
In this case, the subsequent bounds on variance $\E \left[ \| \p{v} - \p{\hat{v}} \|_2^2 \right]  = \E \left[ \| \p{\hat{v}} \|_2^2 \right]  - \|\p{v} \|_2^2$ can be replaced by $R^2-1$.
\end{remark}

From the above mentioned properties, we get a family of quantization schemes depending on the choice of point set $C$ that satisfy Condition~\eqref{cond:hull}. For any choice of quantization scheme from this family, we get the following bound regarding the convergence of the distributed SGD.

\begin{theorem}\label{thm:hull}
Let $C \subset \R^d$ be a point set satisfying Condition~\eqref{cond:hull}. 
Let $\p{g_i} \in \R^d$ be the local gradient computed at the $i$-th node, 
Define $\p{\hat{g}} := \frac1N \sum_{i=1}^N \p{\hat{g}_i}$, where  $\p{\hat{g}_i} := \|\p{g_i}\| \cdot Q_C(\p{g_i} / \|\p{g_i}\|)$. Then, 
\begin{align*}
\E[\p{\hat{g}} ] = \p{g} \quad \mbox{ and} \quad \E \left[ \| \p{g} - \p{\hat{g}} \|_2^2 \right]  \le (R/N)^2 \sum_i \| \p{g_i}\|^2.
\end{align*}
\end{theorem}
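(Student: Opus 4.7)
The plan is to split the theorem into its two claims and reduce each to a previously established lemma about the normalized quantizer $Q_C$, then assemble them using linearity of expectation and independence across the $N$ worker nodes.

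First I would handle unbiasedness. For each node $i$, the vector $\p{g_i}/\|\p{g_i}\|$ lies on $S^{d-1} \subset B_d(\p{0}_d, 1)$, so \namedref{Lemma}{lem:unbiased} gives $\E[Q_C(\p{g_i}/\|\p{g_i}\|)] = \p{g_i}/\|\p{g_i}\|$. Multiplying through by the scalar $\|\p{g_i}\|$ (which is deterministic from the server's perspective of node $i$'s randomness) yields $\E[\p{\hat{g}_i}] = \p{g_i}$. Summing over $i$ and dividing by $N$ gives $\E[\p{\hat{g}}] = \p{g}$ by linearity of expectation.

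Next, for the variance bound, I would exploit independence of the quantization randomness across the $N$ workers. Since $\p{\hat{g}_i} - \p{g_i}$ are independent mean-zero random vectors, we have
\[
\E\left[\|\p{\hat{g}} - \p{g}\|_2^2\right] = \E\left[\left\|\tfrac{1}{N}\sum_{i=1}^N (\p{\hat{g}_i} - \p{g_i})\right\|_2^2\right] = \frac{1}{N^2}\sum_{i=1}^N \E\left[\|\p{\hat{g}_i} - \p{g_i}\|_2^2\right],
\]
where the cross terms vanish by independence and unbiasedness. Then I would apply \namedref{Lemma}{lem:hull} to the unit-norm vector $\p{v}_i := \p{g_i}/\|\p{g_i}\|$ and rescale: since $\p{\hat{g}_i} - \p{g_i} = \|\p{g_i}\|\,(Q_C(\p{v}_i) - \p{v}_i)$, we get $\E[\|\p{\hat{g}_i} - \p{g_i}\|_2^2] = \|\p{g_i}\|^2 \cdot \E[\|Q_C(\p{v}_i) - \p{v}_i\|_2^2] \le R^2 \|\p{g_i}\|^2$. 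Substituting this bound back gives the claimed $(R/N)^2 \sum_i \|\p{g_i}\|^2$.

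I do not anticipate a genuine obstacle here; the theorem is essentially a bookkeeping combination of \namedref{Lemma}{lem:unbiased} and \namedref{Lemma}{lem:hull} with an independence-based variance decomposition. The only step worth being careful about is justifying the cross-term cancellation, which requires that the quantization randomness at different workers is independent (a standing assumption of the distributed setting) and that each $\p{\hat{g}_i}$ is unbiased; both are clean.
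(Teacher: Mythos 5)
Your proposal is correct and follows essentially the same route as the paper's own proof: unbiasedness via Lemma~\ref{lem:unbiased} applied to the normalized gradients, then an independence/unbiasedness-based cancellation of cross terms, and finally Lemma~\ref{lem:hull} rescaled by $\|\p{g_i}\|^2$ to bound each per-node variance. Your write-up is in fact slightly more explicit than the paper's about the rescaling step and the independence assumption behind the cross-term cancellation, but there is no substantive difference.
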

\begin{proof}
Since $\p{\hat{g}}$ is the average of $N$ unbiased estimators, the fact that $\E[\p{\hat{g}}] = \p{g}$ follows from Lemma~\ref{lem:unbiased}.
For the variance computation, note that 
\begin{align*}
\E[\|\p{g} - \p{\hat{g}}\|_2^2] &=\frac{1}{N^2} \left( \sum_{i=1}^N \E[\| \p{g_i} - \p{\hat{g_i}}\|_2^2 ]  \right) \qquad (\mbox{ since } \p{\hat{g_i}} 
\mbox{ is an unbiased estimator of $\p{g}$ })\\
&\le \frac{R^2}{N^2} \sum_{i=1}^N  \| g_i\|^2  \qquad \mbox{(from Lemma~\ref{lem:hull})}.
\end{align*}
\end{proof}

\begin{remark} 
Computing the quantization $Q_C(.)$ amounts to solving a system of $d+1$ linear equations in $\R^{|C|}$. 
For general point sets $C$, this takes about $O(|C|^3)$ time (since $|C| \ge d+1$). However, we show that for certain structured point sets, the quantization $Q_C(.)$ can be computed in linear time.  
\end{remark}

From Theorem~\ref{thm:hull} we observe that the communication cost of the quantization scheme depends on the cardinality of $C$ while the convergence is dictated by the circumradius $R$ of the convex hull of $C$. In the Section~\ref{sec:constructions}, we present several  constructions of point sets which provide varying tradeoffs between communication and variance of the quantizer. 

\vspace{-5pt}\paragraph{Reducing Variance: }\label{sec:repetition}
In this section, we propose a simple repetition technique to reduce the variance of the quantization scheme. 
 For any $s > 1$, let $Q_C(s, \p{v}) := \frac 1s \sum_{i=1}^s Q_C^{(i)}(\p{v})$ be the average over $s$ independent applications of the quantization $Q_C(\p{v})$. Note that even though $Q_C(s, \p{v})$  is not a point in $C$, we can communicate $Q_C(s, \p{v})$ using an equivalent representation as a tuple of $s$ independent applications of $Q_C(\p{v})$ that requires $s \log |C|$ bits. 
Using this repetition technique 
we see that the variance reduces by factor of $s$ while the communication increases by the exact same factor.  
\begin{proposition}\label{prop:var}
Let $C \subset \R^d$ be a point set satisfying Condition~\eqref{cond:hull}. For any $\p{v} \in B_d(\p{0}_d,1)$,  and any $s \ge 1$, let $\p{\hat{v}} := Q_C(s, \p{v})$.  Then, 
$\E \left[ \| \p{v} - \p{\hat{v}} \|_2^2 \right] \le R^2/s$.
\end{proposition}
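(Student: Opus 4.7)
The plan is to treat $Q_C(s, \p{v})$ as an i.i.d. average and use the standard variance-reduction identity for independent unbiased estimators, invoking Lemma~\ref{lem:unbiased} for unbiasedness and Lemma~\ref{lem:hull} for the per-copy bound.

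First I would observe that each $Q_C^{(i)}(\p{v})$ is an independent copy of $Q_C(\p{v})$, so by Lemma~\ref{lem:unbiased} we have $\E[Q_C^{(i)}(\p{v})] = \p{v}$, and therefore $\E[\p{\hat{v}}] = \p{v}$. Define the per-copy error $\p{e_i} := \p{v} - Q_C^{(i)}(\p{v})$. Then
\begin{equation*}
\p{v} - \p{\hat{v}} \;=\; \frac{1}{s}\sum_{i=1}^{s} \p{e_i},
\end{equation*}
with $\E[\p{e_i}] = \p{0}_d$ for every $i$, and $\p{e_1}, \ldots, \p{e_s}$ mutually independent.

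Next I would expand the squared norm and take expectations, using independence to kill the cross terms:
\begin{equation*}
\E\bigl[\|\p{v} - \p{\hat{v}}\|_2^2\bigr]
\;=\; \frac{1}{s^2}\sum_{i=1}^{s}\E\bigl[\|\p{e_i}\|_2^2\bigr] \;+\; \frac{1}{s^2}\sum_{i\neq j} \E\bigl[\langle \p{e_i}, \p{e_j}\rangle\bigr].
\end{equation*}
For $i\neq j$, independence and $\E[\p{e_i}]=\p{0}_d$ give $\E[\langle \p{e_i},\p{e_j}\rangle] = \langle \E[\p{e_i}], \E[\p{e_j}]\rangle = 0$, so only the diagonal terms survive.

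Finally I would apply Lemma~\ref{lem:hull}, which guarantees $\E[\|\p{e_i}\|_2^2] = \E[\|\p{v} - Q_C^{(i)}(\p{v})\|_2^2] \le R^2$ for each $i$. This yields
\begin{equation*}
\E\bigl[\|\p{v} - \p{\hat{v}}\|_2^2\bigr] \;\le\; \frac{1}{s^2}\cdot s \cdot R^2 \;=\; \frac{R^2}{s},
\end{equation*}
as claimed. There is no real obstacle here; the only thing to be careful about is ensuring the independence of the $s$ draws is used correctly when discarding the cross-covariance terms, which is immediate from the definition of $Q_C(s,\p{v})$ as an average of \emph{independent} applications of $Q_C$.
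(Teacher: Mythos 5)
Your proof is correct and follows essentially the same route as the paper's: the paper also writes $\p{v}-\p{\hat{v}}$ as the average of the $s$ independent per-copy errors and invokes Lemma~\ref{lem:hull}, merely compressing the cross-term cancellation into the phrase ``by linearity of expectations.'' Your version just makes explicit the use of unbiasedness (Lemma~\ref{lem:unbiased}) and independence to discard the cross-covariance terms, which is exactly what the paper's one-line display implicitly relies on.
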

\begin{proof}
The proof follows simply by linearity of expectations. 
\begin{align*}
\E \left[ \| \p{v} - \p{\hat{v}} \|_2^2 \right] &= \E \left[ \| \frac1s \sum_{i=1}^s \left(\p{v} - Q_C(\p{v}) \right) \|^2 \right] \le \frac{1}{s} \cdot R^2  \qquad  \mbox{(from Lemma~\ref{lem:hull})}. 
\end{align*}
\end{proof}


\section{Constructions of Point Sets and Lower Bound}\label{sec:constructions}
In this section, we propose constructions of  point sets that satisfy Condition~\eqref{cond:hull} and provide varying tradeoffs between communication and variance of the quantization scheme. 
But first, we start with a lower bound that shows that one must communicate $\Omega(\frac{d}{R^2})$ bits to achieve an error of $O(R^2)$ in the estimate of the gradient as per Condition~\eqref{cond:hull}.

\begin{theorem}\label{thm:lb}
Let $C  \subseteq \mathbb{R}^d$ be a discrete set of points that satisfy Condition~\eqref{cond:hull}. Then 
$$
|C| \ge \exp(\alpha d /R^2)
$$
for some absolute constant $\alpha>0$.
\end{theorem}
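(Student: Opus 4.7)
The plan is to use a covering argument on the sphere $S^{d-1}$. The key observation is that since $B_d(\p{0}_d,1) \subseteq \conv(C)$, for every unit vector $\p{u} \in S^{d-1}$ there must exist some $\p{c} \in C$ with $\langle \p{u}, \p{c}\rangle \ge 1$. Indeed, writing $\p{u} = \sum_i a_i \p{c_i}$ as a convex combination, one has $1 = \|\p{u}\|_2^2 = \sum_i a_i \langle \p{u}, \p{c_i}\rangle$, and since the $a_i$'s form a probability distribution, at least one summand has $\langle \p{u}, \p{c_i}\rangle \ge 1$.

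Next, I would associate to each $\p{c} \in C$ the spherical cap
\[
K_{\p{c}} \defeq \{\p{u} \in S^{d-1} : \langle \p{u}, \p{c}\rangle \ge 1\}.
\]
By the previous paragraph, $\bigcup_{\p{c} \in C} K_{\p{c}} = S^{d-1}$. Since $\|\p{c}\|_2 \le R$, the set $K_{\p{c}}$ is contained in the cap $\{\p{u} \in S^{d-1} : \langle \p{u}, \p{c}/\|\p{c}\|_2\rangle \ge 1/R\}$, a spherical cap of angular half-width $\arccos(1/R)$.

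Then I would appeal to the standard Gaussian-type concentration on the sphere: for a uniformly random $\p{u} \sim S^{d-1}$ and any fixed unit vector $\p{v}$,
\[
\Pr\bigl[\langle \p{u}, \p{v}\rangle \ge t\bigr] \le \exp(-\alpha t^2 d)
\]
for some absolute constant $\alpha > 0$ (valid for $0 \le t \le 1$). Applying this with $t = 1/R$ (which is $\le 1$ by Condition~\eqref{cond:hull}), each cap $K_{\p{c}}$ has normalized measure at most $\exp(-\alpha d/R^2)$. Since the caps cover the whole sphere, a simple union bound gives
\[
1 \le |C| \cdot \exp(-\alpha d/R^2),
\]
i.e., $|C| \ge \exp(\alpha d/R^2)$, as desired.

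The main obstacle is really just invoking the right sphere-concentration inequality cleanly; everything else is a short packing argument. One minor care point is the regime $R$ close to $1$, where the bound is weakest but still nontrivial, and possibly adjusting $\alpha$ so that $|C|\ge \exp(\alpha d/R^2)$ holds uniformly for all $R \ge 1$. This can be handled either by tightening the cap-measure estimate or by treating small $R$ separately (noting that already $|C| \ge d+1$ since $\conv(C)$ is full-dimensional).
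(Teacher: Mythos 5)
Your proposal is correct and follows essentially the same route as the paper: reduce to the fact that every unit vector has inner product at least $1$ with some point of $C$, observe that the corresponding spherical caps (of "height" $1/R$ after normalizing by $\|\p{c}\|_2\le R$) cover $S^{d-1}$, bound each cap's measure by $\exp(-\Omega(d/R^2))$, and count. The only cosmetic differences are that you prove the needed direction of the inner-product characterization directly from the convex combination (the paper invokes its separating-hyperplane characterization, Theorem~\ref{thm:char}) and you cite the standard cap-measure bound rather than rederiving it via Gaussian concentration as the paper does.
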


To prove the lower bound, we show a strong characterization of the point sets that satisfy Condition~\eqref{cond:hull}, and later use this characterization to construct point sets with optimal tradeoffs. 
\begin{theorem}\label{thm:char}
Let $C = \{c_1, \ldots, c_m\} \subseteq \mathbb{R}^d$ be a discrete set of points. The unit ball $B_d(\p{0}_d,1) \subseteq \textsc{conv}(C)$ if and only if for all points $\p{x} \in S^{d-1}$, there exists a point $\p{c} \in C$ such that $\langle \p{x},\p{c} \rangle \ge 1$.
\end{theorem}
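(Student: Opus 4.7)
The statement is a clean instance of linear-duality/separation. I would prove the two directions independently, both via short convexity arguments.

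For the forward direction ($B_d(\p{0}_d,1)\subseteq \conv(C)$ implies every $\p{x}\in S^{d-1}$ has a witness), I would simply take $\p{x}\in S^{d-1}\subseteq B_d(\p{0}_d,1)\subseteq \conv(C)$ and write it as a convex combination $\p{x}=\sum_i a_i \p{c_i}$ with $a_i\ge 0$, $\sum a_i=1$. Taking the inner product with $\p{x}$ gives $1=\langle \p{x},\p{x}\rangle = \sum_i a_i\langle \p{x},\p{c_i}\rangle$. Since the right-hand side is a convex combination of the numbers $\langle \p{x},\p{c_i}\rangle$ that equals $1$, at least one of them must be $\ge 1$, which is exactly the witness we need.

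For the reverse direction, I would argue by contradiction using the hyperplane separation theorem. Assume there exists $\p{y}\in B_d(\p{0}_d,1)\setminus \conv(C)$. Since $C$ is finite, $\conv(C)$ is compact and convex, so there is a unit vector $\p{n}\in S^{d-1}$ and a scalar $b$ with $\langle \p{n},\p{y}\rangle > b$ and $\langle \p{n},\p{c}\rangle \le b$ for all $\p{c}\in C$. By Cauchy--Schwarz, $b<\langle \p{n},\p{y}\rangle\le \|\p{n}\|\,\|\p{y}\|\le 1$. Now apply the hypothesis with $\p{x}=\p{n}\in S^{d-1}$: there exists $\p{c}\in C$ with $\langle \p{n},\p{c}\rangle \ge 1 > b$, contradicting the separating inequality.

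There is no real obstacle here; the one subtle point to get right is the strict-versus-weak inequality ($b<1$, not just $b\le 1$), which is needed so that the witness $\p{c}$ given by the hypothesis actually contradicts the separation. This is secured by the strict separation of the excluded point $\p{y}$ from the closed convex set $\conv(C)$. Because the statement is a geometric ``if and only if,'' I would write up both directions symmetrically in the proof and avoid any appeal to the bound $R$ on the circumradius, which plays no role here.
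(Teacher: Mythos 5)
Your proof is correct and takes essentially the same route as the paper: the nontrivial direction (witness condition implies containment of the ball) is handled in both cases by separating an excluded point of the unit ball from $\conv(C)$ by a hyperplane whose unit normal $\p{n}\in S^{d-1}$ then violates the hypothesis, with the same care about the threshold being strictly below $1$. The only cosmetic difference is in the easy direction, where you average $1=\langle \p{x},\p{x}\rangle=\sum_i a_i\langle \p{x},\p{c_i}\rangle$ over a convex combination, while the paper observes that if all $\langle \p{x},\p{c}\rangle<1$ then the hyperplane $\{\p{z}:\langle \p{x},\p{z}\rangle=1\}$ separates $\p{x}$ from $\conv(C)$; both are one-line observations.
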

\begin{proof}
Assume that for some $\p{x} \in S^{d-1}$, $\langle \p{x}, \p{c} \rangle < 1$ for all $\p{c} \in C$. Which implies that all points of $C$, and therefore the $\conv(C)$, are separated from $\p{x}$ by the hyperplane $ H_w := \{ \p{w} \in \R^d  | \langle \p{x}, \p{w} \rangle = 1\} $. Therefore $\p{x} \notin \conv(C)$. 

To prove the other side, assume $B_d(\p{0}_d,1) \not\subset \conv(C)$. Let $H_w := \{ \p{z} \in \R^d | \langle \p{w}, \p{z} \rangle =1 \}$ be the separating hyperplane that partitions $B_d(\p{0}_d,1)$ such that $\conv(C)$ lies on one side of the hyperplane. Without loss of generality, we assume $ \conv(C) \subset H_w^- := \{ \p{z} \in \R^d |\langle \p{w}, \p{z} \rangle < 1\}$. 
Since $H_w$ partitions the unit ball, the distance of $H_w$ from the origin is $1/\|\p{w}\| \le 1$. 

Now consider the point $\p{x} := \p{w} / \|\p{w}\| \in S^{d-1}$. For this point, $\langle \p{x}, \p{c} \rangle = \frac{1}{\| \p{w}\|} \langle \p{w}, \p{c} \rangle < 1$ for all $\p{c} \in C$.  
\end{proof}

\begin{proofof}{Theorem~\ref{thm:lb}}

The proof of this theorem will use a packing argument for $S^{d-1}$. Let $\p{c} \in C$. We will estimate the \emph{cardinality} of the set $P(\p{c}) :=\{\p{x} \in S^{d-1}: \langle \p{x},\p{c} \rangle \ge 1\} $ under the uniform measure over $S^{d-1}$. Using Theorem \ref{thm:char}, size of $C$ must be at least $$\frac{{\rm area}(S^{d-1})}{\max_{\p{c}\in C}{\rm area}(P(\p{c}))}$$ for it to satisfy Condition~\eqref{cond:hull}.

Note that, $P(\p{c})$ is a hyperspherical cap with angle $\phi$ such that $\cos\phi \ge \frac{1}{\|\p{c}\|} \ge \frac{1}{R}$, since $C$ satisfy Condition~\eqref{cond:hull}. The area of a cap can be computed using the incomplete beta functions, however a probabilistic argument below will serve to lower bound this.

If we uniformly at random choose a vector $\p{z}$ from $S^{d-1}$, then the probability $p$ that it is within an angular distance $\phi$ of a fixed unit vector, $\p{u}$, will exactly be the the ratio of the areas of the hyperspherical cap and the sphere. Again this probability is known to follow a shifted Beta distribution, but we can estimate it from above using concentration bounds. 

Since the area of the hyperspherical cap is invariant to its center, we can take $\p{u}$ to be the first standard basis vector. It is known that if $\p{g}= (g_1,g_2, \dots, g_d)\in \R^d$ is a random vector with i.i.d. Gaussian $\mathcal{N}(0,1)$ entries, then $\p{z}:=\p{g}/\|\p{g}\|$ is uniform over $S^{d-1}$. Therefore,
\begin{align*}
\Pr(\langle \p{z},\p{u} \rangle \ge 1/R) &= \Pr(g_1/\|\p{g}\| \ge 1/R)\\
&\le \Pr(g_1 \ge \|\p{g}\|/R \mid \|\p{g}\| \ge \sqrt{d}/4) + \Pr( \|\p{g}\| < \sqrt{d}/4)\\
&\le \Pr(g_1 \ge \sqrt{d}/4R) + \Pr( \|\p{g}\| < \sqrt{d}/4).
\end{align*}
Now since $g_1$ is  $\mathcal{N}(0,1)$, $\Pr(g_1 \ge \frac{\sqrt{d}}{4R}) \le \exp(-\frac{d}{32R^2}),$ from Chernoff bound. On the other hand $\|\p{g}\|^2$ is a $\chi^2$ distribution of $d$ degrees of freedom. Since that is subexponential, we have $\Pr( \|\p{g}\| < \sqrt{d}/4) \le \Pr( \|\p{g}\|^2< d/16) \le \exp(-\frac{225d}{2048}) \le \exp(-\frac{d}{32R^2})$ for any $R \ge 1$.

This implies,
$
|C| \ge (2\exp(-d/ (32R^2)))^{-1}.
$
\end{proofof}


\subsection{Gaussian point set}
We provide a  randomized construction of point set using the characterization defined above, that is optimal in terms of communication.
\begin{theorem}\label{thm:random-const}
Let $R \in [5, 6\sqrt{d}]$. There exists a set $C$ of $\exp(O(d/R^2 + \log d))$ points of $\ell_2$ norm at most $R$ each, that satisfy Condition~\eqref{cond:hull}.
\end{theorem}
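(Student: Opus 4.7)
The plan is to apply the characterization in Theorem~\ref{thm:char}: it suffices to exhibit a set $C$ of the stated size with each $\|\p{c}\|_2 \leq R$ such that every $\p{x} \in S^{d-1}$ has some $\p{c} \in C$ with $\langle \p{x}, \p{c} \rangle \geq 1$. I would use a probabilistic construction: draw $m$ i.i.d.\ samples $\p{u}_1, \ldots, \p{u}_m$ uniformly from $S^{d-1}$ and set $\p{c}_i = R\,\p{u}_i$, which forces $\|\p{c}_i\|_2 = R$ deterministically. The goal is to pick $m = \exp(O(d/R^2 + \log d))$ and show the inner-product condition holds for every $\p{x} \in S^{d-1}$ with positive probability.

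The argument proceeds through an $\varepsilon$-net discretization. Fix $\varepsilon = 1/(2R)$ and let $N(\varepsilon) \subset S^{d-1}$ be an $\varepsilon$-net in Euclidean distance, so $|N(\varepsilon)| \leq (6R+1)^d$. I would first observe that it suffices to verify the stronger condition that for every $\p{y} \in N(\varepsilon)$ some sample $\p{c}_i$ achieves $\langle \p{y}, \p{c}_i \rangle \geq 3/2$: indeed, for any $\p{x} \in S^{d-1}$, choosing $\p{y} \in N(\varepsilon)$ with $\|\p{x}-\p{y}\|_2 \leq \varepsilon$ gives $\langle \p{x}, \p{c}_i \rangle \geq 3/2 - \varepsilon R = 1$ by Cauchy--Schwarz. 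For each fixed $\p{y} \in N(\varepsilon)$ I would then lower-bound the cap probability $p := \Pr[\langle \p{y}, \p{u}_1 \rangle \geq 3/(2R)]$. Using the standard representation $\p{u}_1 = \p{g}/\|\p{g}\|$ with $\p{g} \sim \mathcal{N}(0, I_d)$, bounding $\Pr[\|\p{g}\| > 2\sqrt{d}] \leq e^{-\Omega(d)}$ by $\chi^2$ concentration (exactly as in the proof of Theorem~\ref{thm:lb}), and applying the Gaussian lower tail bound $\Pr[Z \geq t] \geq \tfrac{1}{\sqrt{2\pi}}\tfrac{t}{t^2+1}e^{-t^2/2}$ at $t = 3\sqrt{d}/R$, one obtains $p \geq \exp(-O(d/R^2 + \log d))$ throughout the regime $R \in [5, 6\sqrt{d}]$.

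With these two ingredients, a union bound yields failure probability at most $|N(\varepsilon)|\,\exp(-mp)$, which is less than $1$ whenever $m \geq (\log|N(\varepsilon)| + 1)/p$. Since $\log|N(\varepsilon)| = O(d \log R)$ and the hypothesis $R \leq 6\sqrt{d}$ gives $\log R = O(\log d)$, we have $\log|N(\varepsilon)| = O(d \log d)$, a quantity polynomial in $d$ and hence of the form $\exp(O(\log d))$. Combining with the cap bound gives $m = \exp(O(d/R^2 + \log d))$, matching the claim.

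The main obstacle I foresee is the bookkeeping of the net-versus-cap tradeoff: one has to pick $\varepsilon$ small enough (here $\varepsilon = 1/(2R)$) that the inner-product slack $\varepsilon R$ is controllable, while keeping $|N(\varepsilon)|$ small enough that $\log|N(\varepsilon)|$ does not overwhelm the budget. Both boundary conditions on $R$ are used here: $R \leq 6\sqrt{d}$ forces $\log R = O(\log d)$ so the net contribution is only polynomial, and $R \geq 5$ keeps the cap threshold $3/(2R)$ bounded safely below $1$ so that the Gaussian tail bound lies in its useful regime and the cap probability has the clean lower bound $\exp(-O(d/R^2 + \log d))$.
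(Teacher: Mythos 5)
Your proposal is correct and follows essentially the same route as the paper: reduce to the characterization of Theorem~\ref{thm:char}, pass to an $\varepsilon$-net at scale $\sim 1/R$ with a slack threshold at the net points, lower-bound the per-point success probability by Gaussian tail estimates, and finish with a union bound over the net. The only real difference is that you draw the points uniformly on the sphere of radius $R$ (so the norm bound is automatic) instead of as Gaussian vectors with coordinate variance $R^2/9d$ as in the paper, which trades the paper's separate norm-control event $E_1$ for a slightly more involved cap-probability lower bound (Gaussian lower tail plus a $\chi^2$ upper tail and a subtraction, the step your sketch leaves implicit but which is standard and goes through for $R \ge 5$).
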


To prove Theorem~\ref{thm:random-const}, we first show the following lemma that allows us to union bound over the discrete set of points in an $\varepsilon$-net of a unit sphere.  Consider an $\varepsilon$-net for the unit sphere $N(\varepsilon)$ for any $\varepsilon < 1/R$. We know that such a set exists with $|N(\varepsilon)| \le \left( 1+\frac{2}{\varepsilon}\right)^d \le \left(\frac{3}{\varepsilon}\right)^d$ \cite{cohen1997covering}.  
 
\begin{lemma}\label{lem:witnessnet}
Let $C$ be a set of points in $\R^d$ such that $\|c\|^2 \le R^2$ for all $c \in C$. 
If for each $y \in N(\varepsilon)$,  $y^T c \ge 2$ for some $c \in C$, then it holds that for all $x \in S^{d-1}$, there is a $c' \in C$ such that $x^T c' \ge 1$. 
\end{lemma}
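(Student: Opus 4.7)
The plan is a direct approximation argument. Given any $\p{x}\in S^{d-1}$, I would first use the defining property of the $\varepsilon$-net to pick a point $\p{y}\in N(\varepsilon)$ with $\|\p{x}-\p{y}\|_2\le\varepsilon$. By the hypothesis of the lemma, there exists a $\p{c}\in C$ with $\p{y}^T\p{c}\ge 2$; this is the candidate I would take for $\p{c}'$.

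Next I would estimate $\p{x}^T\p{c}$ by writing $\p{x}=\p{y}+(\p{x}-\p{y})$ and splitting
\[
\p{x}^T\p{c} \;=\; \p{y}^T\p{c} + (\p{x}-\p{y})^T\p{c} \;\ge\; 2 - \|\p{x}-\p{y}\|_2\,\|\p{c}\|_2
\]
using Cauchy--Schwarz on the second term. Plugging in $\|\p{x}-\p{y}\|_2 \le \varepsilon$ and $\|\p{c}\|_2\le R$ yields $\p{x}^T\p{c}\ge 2-\varepsilon R$. Since the hypothesis is that $\varepsilon<1/R$, we have $\varepsilon R<1$, hence $\p{x}^T\p{c}>1$, which is exactly what we need with $\p{c}'=\p{c}$.

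The only thing to verify cleanly is the direction of the approximation bound: because the inner product could be smaller at $\p{x}$ than at $\p{y}$, we need the margin of $2$ (not just $1$) in the net-point hypothesis to absorb a loss of at most $\varepsilon R<1$. This is precisely why the lemma demands $\p{y}^T\p{c}\ge 2$ rather than $\ge 1$, and why the threshold $\varepsilon<1/R$ on the net resolution appears. There is no real obstacle; the argument is a one-line net-to-sphere transfer, and the rest of Theorem~\ref{thm:random-const} will follow by applying a union bound over $N(\varepsilon)$ rather than over the full sphere.
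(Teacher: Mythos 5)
Your proof is correct and is essentially the same argument as the paper's: decompose $\p{x}=\p{y}+(\p{x}-\p{y})$ for the nearest net point $\p{y}$, apply Cauchy--Schwarz to the perturbation term, and use the margin of $2$ together with $\varepsilon R<1$ (from the standing assumption $\varepsilon<1/R$) to conclude $\p{x}^T\p{c}>1$. No gaps; nothing further is needed.
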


\begin{proof}
Let $\p{y} \in N(\varepsilon)$ be a net-point, and $\p{c} \in C$ be such that $\p{x}^T c \ge 2$. 
Note that all points $\p{x} \in S^{d-1}$ in the $\varepsilon$-neighborhood of $\p{y}$ can be written as $\p{x} = \p{y}+ \p{\tilde{y}}$, 
where $ \p{\tilde{y}} \in \R^d$ has norm at most $\epsilon$. 
Therefore, $\p{x}^T \p{c}  = \p{y}^T \p{c}  + \p{\tilde{y}}^T \p{c} \ge 2 - \|\p{\tilde{y}}\| \|\p{c}\| > 1$. 
Since $N(\varepsilon)$ covers the entire unit sphere, it follows that for all points $\p{x}$ on the unit sphere, there will be a $\p{c} \in C$ such that $\p{x}^T\p{c} \ge 1$. 
\end{proof}

\begin{proofof}{Theorem~\ref{thm:random-const}}

Let us choose the random set $C$ of $t:= e^{\frac{20d}{R^2}+2\log d}$ points in $\R^d$ in the following way: Each coordinate of any $\p{c} \in C$ is chosen independently from a zero-mean Gaussian distribution with variance $\sigma^2:=\frac{R^2}{9d}$. 

We say that a vector $\p{x} \in S^{d-1}$ is a witness for $C$ if $\p{x}^T \p{c} < 1$ for all  $\p{c} \in C$. We now show that with high probability, there is no witness for $C$ in $S^{d-1}$. Using Lemma~\ref{lem:witnessnet}, it is sufficient to show that for any $\p{x} \in N(\varepsilon)$, $\p{x}^T \p{c}\ge 2 $ for some $\p{c} \in C, \varepsilon \le 1/R$.

Let us define $E_1$ to be the event that $\|\p{c}\|^2 \le R^2$ for all $\p{c} \in C$. Since every entry of $\p{c}$ is chosen from i.i.d. Gaussian, the norm $\|\p{c}\|^2$ is distributed according to $\chi^2$-distribution
with variance $2d\sigma^4$. Since $\chi^2$-distribution is subexponential ~\cite{wainwright2019high}[ Eq.~2.18], for any $\p{c} \in C$, we have, for any $l \ge 1$,
$
\Pr(\|\p{c}\|^2 > d\sigma^2(l+1)) \le e^{-dl/8}.
$
This implies, 
$$
\Pr(\|\p{c}\|^2 > R^2) \le e^{-\frac18(R^2/\sigma^2 -d)} \le e^{-d},
$$
substituting the value of $\sigma^2$. 
 Then by union bound over all $c \in C$.  
\begin{align}\label{eq:E1bar}
\Pr[ \bar{E}_1] \le t e^{-d} = e^{-d+20d/R^2 +2\log d} \le e^{-\Omega(d)},
\end{align}
for $R\ge 5$.



Let $E_2$ denote the event that for each $\p{y} \in N(\varepsilon)$, there exists $\p{c} \in C$ such that $\p{y}^T \p{c} \ge 2 $.
For any fixed $\p{y} \in N(\varepsilon)$, and $\p{c} \in C$, define $p_{\p{y}, \p{c}}$ to be the probability that  $\p{y}^T \p{c} \ge 2$. 
Note that since $c$ has i.i.d. Gaussian entries, then for any fixed $\p{y} \in N(\varepsilon)$, the inner product $\p{y}^T\p{c}$ is distributed according to ${\cal N}(0, \sigma^2)$. 
Using standard bounds for Gaussian distributions~\cite{borjesson1979simple},
\begin{align*}
p_{\p{y}, \p{c}} &:= \Pr[\p{y}^T\p{c} \ge 2] \ge \frac{2\sigma}{(\sigma^2+4)\sqrt{2\pi}} e^{-\frac{2}{\sigma^2}} \\
&\ge \frac{1}{\sqrt{2\pi}} \min(\sigma^{-1}, \sigma/4)e^{-\frac{2}{\sigma^2}} \ge  \frac{R}{12\sqrt{2\pi d}} e^{-\frac{2}{\sigma^2}},
\end{align*} 
for any $R \le 6 \sqrt{d}$.

Since each $\p{c}$ is chosen independently, the probability that $\p{y}^T\p{c} < 2$ for all $\p{c} \in C$ is $(1 - p_{\p{y}, \p{c}})^{t} \le e^{- t \cdot p_{\p{y}, \p{c}}}$. 
Now  for $\varepsilon = 1/R$, by union bound, since $|N(\varepsilon)| \le \left(\frac{3}{\varepsilon}\right)^d$, 
\begin{align*}
\Pr[\bar{E_2}] &=\Pr[ \exists~\p{y} \in N(\varepsilon) \text{ s.t. } \p{y}^T \p{c} < 2 ~\forall~\p{c}\in C ] \\
&\le e^{- t \cdot p_{y,c} + d \log 3R}  \\
&=  e^{- t \cdot e^{-\frac{18 d}{R^2} - \log( \frac{12\sqrt{2\pi d}}{R}) } + d \log 3R } \\
& = e^{- e^{\frac{2 d}{R^2} +2\log d - \log( \frac{12\sqrt{2\pi d}}{R}) } + d \log 3R }\\
& = e^{- d^2e^{\frac{2 d}{R^2} - \log( \frac{12\sqrt{2\pi d}}{R}) } + d \log 3R }\\
&\le e^{-\Omega(d)}.  
\end{align*}
Therefore, $\Pr[\bar{E_1} \cup \bar{E_2}]  \le e^{-\Omega(d)}$. Then using Lemma~\ref{lem:witnessnet} and Theorem~\ref{thm:char}, we obtain  the statement of the theorem.



\end{proofof}

The above stated theorem provides a randomized algorithm to generate a point set of size about $\exp(\Theta(d/R^2))$ such that the quantization scheme defined in Section~\ref{sec:quant} instantiated with this point set achieves a variance of $O(R^2)$ while communicating $\tilde{O}(d/R^2)$ bits, hence meeting the lower bound of Theorem~\ref{thm:lb}. 
In particular, there exists a quantization scheme that achieves $O(1)$ variance with $\tilde{O}(d)$ bits of communication (see supplementary material for a deterministic construction). Also, at the cost of communicating only $O(\log d)$ bits, our quantization scheme can achieve a variance of $O(d/\log d)$. The deterministic constructions we provide (in Sec.~\ref{sec:derandomRM}, and also $Q_{cp}$ in the supplement), meet this bound up to a factor of $\log d$. 

\subsection{Derandomizing with Reed Muller Codes}\label{sec:derandomRM}
In this section, we propose a deterministic construction of point set based on first order Reed-Muller codes that satisfy Condition~\ref{cond:hull}. 
We assume $d$ to be a power of $2$, \ie $d = 2^p$ for some $p\ge1$. 

Our quantization scheme is based on the first order Reed-Muller codes, $\textsc{RM}(1, p)$~(\cite{macwilliams1977theory}). Each codeword of $\textsc{RM}(1, p)$ is given as the evaluations of a degree $1$, $p$-variate polynomial over all points in $\F_2^p$. Mapping these codewords to reals using the coordinate-wise map $\phi: \F_2 \rightarrow \R$ defined as $\phi(b) = (-1)^b$ will give us a set of $ 2d$ points in $\{\pm1\}^d$. Let $C_{RM}$ denote this set of mapped codewords. 

We show that the set of points in $C_{RM}$ satisfy the characterization of Theorem~\ref{thm:char}, and therefore will give us a quantization scheme with $\log 2d$ communication and the following guarantees: 

\begin{proposition}\label{prop:RMcode}
For any $\p{v} \in B_d(\p{0}_d,1)$, let $\p{\hat{v}} := Q_{C_{RM}}(\p{v})$.  Then, 
$\E[\p{\hat{v}}] = \p{v}$  and,  $\E \left[ \| \p{v} - \p{\hat{v}} \|_2^2 \right] = O(d)$. 
\end{proposition}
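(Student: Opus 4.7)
The plan is to verify that $C_{RM}$ satisfies Condition~\eqref{cond:hull} with circumradius $R = \sqrt{d}$, so that the unbiasedness and variance bound follow directly from Lemma~\ref{lem:unbiased} and Lemma~\ref{lem:hull}.

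First I would reinterpret $C_{RM}$ in a more convenient form. The codewords of $\text{RM}(1,p)$ are evaluations of affine functions $\p{y} \mapsto \langle \p{a}, \p{y}\rangle + b$ over $\F_2^p$, for $\p{a} \in \F_2^p$, $b \in \F_2$. After applying $\phi(b) = (-1)^b$ coordinate-wise, the resulting points are exactly $\{\pm \p{h_a} : \p{a} \in \F_2^p\}$, where $\p{h_a} \in \{\pm 1\}^d$ is the vector with $(\p{h_a})_{\p{y}} = (-1)^{\langle \p{a}, \p{y}\rangle}$. These are the $2d$ rows of $H_p$ (in Sylvester's form) together with their negations, and each has Euclidean norm $\sqrt{d}$. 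This immediately gives $\conv(C_{RM}) \subseteq B_d(\p{0}_d, \sqrt{d})$.

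Next I would verify the inner containment $B_d(\p{0}_d, 1) \subseteq \conv(C_{RM})$ via the characterization in Theorem~\ref{thm:char}: it suffices to show that for every $\p{x} \in S^{d-1}$ there exists $\p{c} \in C_{RM}$ with $\langle \p{x}, \p{c}\rangle \ge 1$. The key observation is that the rows of $H_p$ are pairwise orthogonal of norm $\sqrt{d}$, so the vectors $\{\p{h_a}/\sqrt{d}\}_{\p{a} \in \F_2^p}$ form an orthonormal basis of $\R^d$. Parseval's identity therefore yields
\[
\sum_{\p{a} \in \F_2^p} \langle \p{x}, \p{h_a}\rangle^2 \;=\; d \cdot \|\p{x}\|_2^2 \;=\; d.
\]
By averaging, some $\p{a}^*$ satisfies $\langle \p{x}, \p{h_{a^*}}\rangle^2 \ge 1$, and then one of $\p{h_{a^*}}$ or $-\p{h_{a^*}}$, both of which lie in $C_{RM}$, has inner product at least $1$ with $\p{x}$.

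Combining the two containments gives Condition~\eqref{cond:hull} with $R = \sqrt{d}$. Lemma~\ref{lem:unbiased} then yields $\E[\p{\hat v}] = \p{v}$, and Lemma~\ref{lem:hull} gives $\E[\|\p{v} - \p{\hat v}\|_2^2] \le R^2 = d$, which is the claimed $O(d)$ bound. There is no real obstacle here; the only conceptual step is recognizing $C_{RM}$ as a symmetrized orthonormal basis, after which Parseval plus pigeonhole does all the work.
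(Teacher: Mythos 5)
Your proof is correct and follows essentially the same route as the paper: identify $C_{RM}$ with the $\pm$ rows of the Sylvester Hadamard matrix, verify Condition~\eqref{cond:hull} with $R=\sqrt{d}$ via the characterization of Theorem~\ref{thm:char} using the orthogonality of the Hadamard rows (your Parseval identity is exactly the paper's computation $\sum_{\p{c}\in C_{RM}}(\p{x}^T\p{c})^2 = 2\|H_p\p{x}\|^2 = 2d$), and conclude by averaging, the $\pm$ symmetry, and Lemmas~\ref{lem:unbiased} and~\ref{lem:hull}. No gaps.
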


\begin{proof}
We prove this theorem by showing that the point set $C_{RM}$ satisfies the characterization of Theorem~\ref{thm:char}. Since all points in $C_{RM}$ have squared norm exactly $d$, from Lemma~\ref{lem:unbiased} and Lemma~\ref{lem:hull}, the proof follows. 

First note that the matrix with the points in $C_{RM}$ as its rows, has the following structure: 
\[
H := \begin{bmatrix}
H_p \\ -H_p
\end{bmatrix}
\]
where, $H_p$ is the $2^p \times 2^p$ Hadamard matrix. 

For any fixed $\p{x} \in S^{d-1}$, consider the sum $S(\p{x}):=\sum_{\p{c} \in C_{RM}} ( \p{x}^T \p{c})^2$. We first show that $S(\p{x}) \ge 2(d+1)$. 
\begin{align*}
S(\p{x}) &= \sum_{\p{c} \in C_{RM}} ( \p{x}^T \p{c})^2 
= 2 \sum_{\p{h_i} \in H_p} ( \p{x}^T \p{h_i})^2 \\
&=  2 \| H_p \p{x} \|^2 
= 2 (\p{x}^T H_p^T)(H_p \p{x}) \\
& \overset{(i)}{=} 2d \cdot \|x\|^2 = 2 d.
\end{align*}
$(i)$ follows from the fact that the columns of the Hadamard matrix are mutually orthogonal and therefore, $H_p^T H_p = d\cdot I_d$, where, $I_d$ denotes the $d \times d$ identity matrix. 

By an averaging argument, it then follows that there exists at least one $\p{c} \in  C_{RM}$ such that $\lvert \p{x}^T \p{c} \rvert \ge 1$. Since for every $ \p{c} \in C_{RM}$, there exists $-\p{c} \in C_{RM}$, we get that $x^T\p{c}  \ge 1$ for some $\p{c} \in C_{RM}$.
\end{proof}

\begin{remark}\label{rem:RMcodes}
Instead of first order Reed-Muller codes, 
we can use any binary linear code $\textsc{C} \subseteq \F_2^d$ to construct the point set as follows. Map all the codewords from $\F_2^d$  to $\R^d$ using $\phi$ described above. The point set containing all such mapped codewords, and their complements will give a quantization scheme with variance $O(d)$. The communication will however be $\log (2|\textsc{C}|)$, where $|\textsc{C}|$ denotes the number of codewords in $\textsc{C}$. In this regard, the first order Reed-Muller codes described above provide the best communication guarantees and the quantization is also efficiently computable.
\end{remark}
\subsection{Other Deterministic Constructions}\label{sec:det-const}
We now present several explicit constructions of point sets that give quantization schemes with varying tradeoffs. On one end of the spectrum, the cross-polytope scheme requires only $O(\log d)$ bits to communicate an unbiased estimate of a vector in $\R^d$ with variance $O(d)$.  While on the other end, the $\varepsilon$-net based scheme achieves a constant variance at the cost of $O(d)$ bits of communication. 


\subsubsection{Cross Polytope Scheme}\label{sec:cp}
Consider the following point set of $2d$ points in $\R^d$:
\[
C_{cp} := \{ \pm \sqrt{d}~\p{e_i}  \mid  i \in [d] \}, 
\]
The convex hull $\conv(C_{cp})$ is a scaled cross polytope that satisfies Condition~\eqref{cond:hull} with $R=\sqrt{d}$.  
Let $Q_{C_{cp}}$ be the instantiation of the quantization scheme described in Section \ref{sec:quant} with the point set $C_{cp}$. 

To compute the convex combination of any point $\p{v} \in \conv(C_{cp})$, we need a non-negative solution to the following system of equations
\begin{align}\label{eq:lp_cp}
\begin{bmatrix} \sqrt{d}I_d & -\sqrt{d}I_d \end{bmatrix} \begin{bmatrix}a_1\\ \vdots \\ a_{2d} \end{bmatrix} = \begin{bmatrix} v_1\\ \vdots \\ v_{d}  \end{bmatrix} \quad \mbox{such that } \sum_{i=1}^{2d} a_i = 1,
\end{align}
where, $I_d$ is the $d \times d$ identity matrix. 
Equation~\ref{eq:lp_cp} leads to the following closed form solution that can be computed in $O(d)$ time:  
\begin{align} \label{eq:solution_cp}
a_i =
\left\{
	\begin{array}{ll}
		\frac{v_i}{\sqrt{d}} + \frac{\gamma}{2d}  & \mbox{if } v_i > 0  \mbox{ and } i \le d\\[3pt]
		- \frac{v_i}{\sqrt{d}} + \frac{\gamma}{2d}  & \mbox{if } v_i \le 0  \mbox{ and } i > d\\[3pt]
		\frac{\gamma}{2d}  & \mbox{otherwise } \\[3pt]
	\end{array}
\right.
\end{align}
 where, $\gamma := 1 -\frac{\|\p{v}\|_1}{\sqrt{d}}$, is a non-negative quantity for every $\p{v} \in B_d(\p{0}_d,1)$.
 
The bound on the variance of the quantizer follows directly from Lemma~\ref{lem:hull}. 
\begin{proposition}\label{prop:crosspolytope}
For any $\p{v} \in B_d(\p{0}_d,1)$, let $\hat{\p{v}} := Q_{C_{cp}}(\p{v})$. Then, 
$\E[\p{\hat{v}}]=  \p{v}$  and $\E\left[ \| \p{v} - \p{\hat{v}} \|_2^2 \right] = O(d).$
\end{proposition}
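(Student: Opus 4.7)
The plan is to reduce the proposition to a direct invocation of Lemma~\ref{lem:unbiased} and Lemma~\ref{lem:hull}. What needs to be done is simply to verify that the point set $C_{cp}$ satisfies Condition~\eqref{cond:hull} with parameter $R = \sqrt{d}$, after which both conclusions follow mechanically.

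First I would check the outer containment $\conv(C_{cp}) \subseteq B_d(\p{0}_d,\sqrt{d})$. This is immediate because every generator $\pm\sqrt{d}\,\p{e}_i$ has $\ell_2$-norm exactly $\sqrt{d}$, and the $\ell_2$-norm is convex, so the bound is preserved under convex combinations. Next I would verify the inner containment $B_d(\p{0}_d,1)\subseteq \conv(C_{cp})$. The cleanest way is to use the explicit closed form in \eqref{eq:solution_cp}: for any $\p{v}\in B_d(\p{0}_d,1)$, define the $a_i$'s by that formula. The key observation is that by Cauchy--Schwarz,
\[
\|\p{v}\|_1 \;\le\; \sqrt{d}\,\|\p{v}\|_2 \;\le\; \sqrt{d},
\]
so $\gamma = 1 - \|\p{v}\|_1/\sqrt{d}\ge 0$; thus every $a_i\ge 0$. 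Summing the $2d$ coefficients gives $\|\p{v}\|_1/\sqrt{d} + 2d\cdot(\gamma/(2d)) = \|\p{v}\|_1/\sqrt{d} + \gamma = 1$. A straightforward coordinate check shows $\sum_i a_i(\sqrt{d}\,\p{e}_i) - \sum_{i>d} a_i(\sqrt{d}\,\p{e}_{i-d}) = \p{v}$, confirming that $\p{v}\in\conv(C_{cp})$.

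Having established Condition~\eqref{cond:hull} with $R=\sqrt{d}$, Lemma~\ref{lem:unbiased} applied to $Q_{C_{cp}}$ yields $\E[\p{\hat{v}}]=\p{v}$, and Lemma~\ref{lem:hull} yields
\[
\E\bigl[\|\p{v}-\p{\hat{v}}\|_2^2\bigr] \;\le\; R^2 \;=\; d \;=\; O(d).
\]

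There is no real obstacle here; the proposition is essentially a corollary of the general machinery in Section~\ref{sec:quant} once the cross-polytope is identified as witnessing Condition~\eqref{cond:hull}. The only step requiring a (tiny) calculation is the Cauchy--Schwarz bound $\|\p{v}\|_1\le\sqrt{d}\|\p{v}\|_2$, which guarantees the coefficients in \eqref{eq:solution_cp} are valid. If one wanted to be more self-contained and not cite \eqref{eq:solution_cp}, one could alternatively appeal to Theorem~\ref{thm:char}: for $\p{x}\in S^{d-1}$, take $i^\star=\arg\max_i |x_i|$; then $|x_{i^\star}|\ge 1/\sqrt{d}$, so $\langle \p{x},\sgn(x_{i^\star})\sqrt{d}\,\p{e}_{i^\star}\rangle \ge 1$, witnessing the condition of Theorem~\ref{thm:char}.
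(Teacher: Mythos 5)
Your proof is correct and follows essentially the same route as the paper: verify that $C_{cp}$ satisfies Condition~\eqref{cond:hull} with $R=\sqrt{d}$ (outer containment from the vertex norms, inner containment from the explicit coefficients in \eqref{eq:solution_cp}), then invoke Lemma~\ref{lem:unbiased} and Lemma~\ref{lem:hull}. You actually spell out the nonnegativity and normalization checks (via $\|\p{v}\|_1\le\sqrt{d}\|\p{v}\|_2$) that the paper only asserts, and your side remark using Theorem~\ref{thm:char} with the largest-magnitude coordinate is a valid alternative witness, but the argument is the same in substance.
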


\begin{proof}
The proof of Proposition~\ref{prop:crosspolytope} follows directly from Lemma~\ref{lem:hull} provided  the point set $C_{cp}$ satisfies Condition~\eqref{cond:hull} with $R = \sqrt{d}$. We will now prove this fact. 

Since each vertex is of the form $\pm \sqrt{d} \p{e_i}$, it follows that all the vertices of $\conv(C_{cp})$, and hence the entire convex hull lies inside a ball of radius $\sqrt{d}$, \ie, $\conv(C_{cp}) \subset B_d(\p{0}_d, \sqrt{d})$.

To prove that the unit ball is contained in the convex hull $\conv(C_{cp})$, 
we pick any arbitrary point $v \in B_d(\p{0}_d,1)$ and show that it can written as a convex combination of points in $C_{cp}$. The fact follows from the solution to the system of linear equations~\eqref{eq:lp_cp}  given in Equation~\eqref{eq:solution_cp}.
Note that the solution satisfies $a_i \ge 0$ and $\sum_i a_i = 1$ for any point $\p{v} \in B_d(0,1)$. 
\end{proof}

Moreover, using the variance reduction technique described in Section~\ref{sec:quant} with $s = O(\frac{d}{\log d})$, the cross polytope based quantization scheme $Q_{C_{cp}}$ achieves a variance of $O(\log d)$ at the cost of communicating $O(d)$ bits. 

We note that the cross-polytope quantization scheme described above when used along with the variance reduction technique (by repetition), is in essence similar to Maurey sparsification~(\cite{acharya2019distributed}). 

\subsubsection{Scaled $\varepsilon$-nets}\label{subsec:epsnet}
On the other end of the spectrum, we now show the existence of points sets of exponential size that are contained in a constant radius ball. This point set allows us to obtain a gradient quantization scheme with $O(d)$ communication and $O(1)$ variance. 
We show that an appropriate constant scaling of an $\varepsilon$-net points (see Definition~\ref{def:epsnet}) satisfies Condition~\eqref{cond:hull}. 
\begin{lemma}\label{lem:scaling}
For any $0< \varepsilon < 1$, let  
$R = \frac{1}{1- \varepsilon}$. The point set $C_{\text{net}}:= \{ R \cdot \p{u} \mid \p{u} \in N(\varepsilon)\}$ satisfies Condition~\eqref{cond:hull}.
\end{lemma}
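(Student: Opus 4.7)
The plan is to verify the two inclusions in Condition~\eqref{cond:hull} separately. The outer inclusion $\conv(C_{\text{net}}) \subseteq B_d(\p{0}_d, R)$ is immediate: every point in $C_{\text{net}}$ has the form $R\p{u}$ with $\p{u}\in S^{d-1}$, so every point of $C_{\text{net}}$ has norm exactly $R$, and $B_d(\p{0}_d, R)$ is convex, so it contains the convex hull. Also $R = 1/(1-\varepsilon) > 1$ for any $\varepsilon \in (0,1)$.

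For the inner inclusion $B_d(\p{0}_d, 1) \subseteq \conv(C_{\text{net}})$, I will invoke the characterization in Theorem~\ref{thm:char}: it suffices to show that for every $\p{x} \in S^{d-1}$ there is some $\p{c} \in C_{\text{net}}$ with $\langle \p{x}, \p{c}\rangle \geq 1$. Fix such an $\p{x}$. Because $N(\varepsilon)$ is an $\varepsilon$-net of the unit sphere, there exists a net point $\p{u} \in N(\varepsilon)$ with $\|\p{x} - \p{u}\|_2 \leq \varepsilon$. Then writing $\p{u} = \p{x} + (\p{u}-\p{x})$ and using Cauchy--Schwarz with $\|\p{x}\|_2 = 1$, I get
\[
\langle \p{x}, \p{u}\rangle = \|\p{x}\|_2^2 + \langle \p{x}, \p{u}-\p{x}\rangle \geq 1 - \|\p{x}\|_2 \|\p{u}-\p{x}\|_2 \geq 1 - \varepsilon.
\]
Multiplying by $R = 1/(1-\varepsilon)$ yields $\langle \p{x}, R\p{u}\rangle \geq 1$, and $R\p{u} \in C_{\text{net}}$ by construction. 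Applying Theorem~\ref{thm:char} then gives $B_d(\p{0}_d, 1) \subseteq \conv(C_{\text{net}})$, completing the proof.

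No step looks like a serious obstacle; the main observation is simply that the $\varepsilon$-net approximation controls the deviation of $\langle \p{x}, \p{u}\rangle$ from $1$ linearly in $\varepsilon$, so scaling the net by exactly $1/(1-\varepsilon)$ is the right amount to push the inner product back up to $1$. The only thing to double-check is the monotonicity $R > 1$ (needed in Condition~\eqref{cond:hull}), which is immediate for $\varepsilon \in (0,1)$.
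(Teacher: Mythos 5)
Your proof is correct, but it takes a different route from the paper's. The paper argues geometrically: it sets $K=\conv(N(\varepsilon))$, considers the ball inscribed in $K$ and the face of $K$ tangent to it at a point $\p{z}$, extends the segment from the origin through $\p{z}$ to a sphere point $\p{x}$, and uses the net property to bound $\|\p{x}-\p{z}\|\le\varepsilon$, concluding that the inscribed radius is at least $1-\varepsilon$ and hence that scaling by $R\ge \frac{1}{1-\varepsilon}$ suffices. You instead invoke the characterization of Theorem~\ref{thm:char} and verify it with a one-line computation: for any $\p{x}\in S^{d-1}$ and a net point $\p{u}$ with $\|\p{x}-\p{u}\|_2\le\varepsilon$, Cauchy--Schwarz gives $\langle \p{x},\p{u}\rangle\ge 1-\varepsilon$, so $\langle \p{x},R\p{u}\rangle\ge 1$. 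Your argument is arguably tighter as written: the paper's step identifying $\mathrm{dist}(\p{x},K)$ with $\|\p{x}-\p{z}\|$ and the collinearity used in $\|\p{z}\|=1-\|\p{x}-\p{z}\|$ are left implicit, whereas your inner-product bound needs no geometric bookkeeping and directly reuses the machinery (Theorem~\ref{thm:char}) the paper built for exactly this purpose. You also explicitly verify the outer inclusion $\conv(C_{\text{net}})\subseteq B_d(\p{0}_d,R)$, which the paper leaves tacit. As a minor aside, since $\p{x}$ and $\p{u}$ are both unit vectors, $\langle\p{x},\p{u}\rangle = 1-\tfrac12\|\p{x}-\p{u}\|_2^2\ge 1-\varepsilon^2/2$, so your method would even allow the slightly smaller scaling $R=\frac{1}{1-\varepsilon^2/2}$; the paper's inscribed-ball picture, on the other hand, gives a bit more geometric intuition about how the net controls the hull.
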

\begin{proof}
Let $K:= \conv(N(\varepsilon))$ be the convex hull of the $\varepsilon$-net points of the unit sphere. Let $B_d(\p{0}_d, r)$ be the inscribed ball in $K$ for some $r < 1$. We show that $r \ge 1-\varepsilon$.
 
Consider the face of $K$ that is tangent to $B_d(\p{0}_d,r)$ at point $\p{z}$. We will show that $\|\p{z}\|_2 \ge 1  - \varepsilon$. 
Extend the line joining $(\p{0}_d, \p{z})$ to meet ${\cal S}^{d-1}$ at point $\p{x}$. 
Since  $\p{x} \in {\cal S}^{d-1}$, we know that there exists a net point $\p{u}$ at a distance of at most $\varepsilon$ from it. Therefore, the distance of $\p{x}$ from $K$ is upper bounded by $\varepsilon$, \ie ~$\text{dist}(\p{x}, K)  = \|\p{x}-\p{z}\| \le \|\p{x}-\p{u}\| \le \varepsilon$. Therefore $\|\p{z}\|= 1-\|\p{x}-\p{z}\| \ge 1-\varepsilon$. 
 
Therefore scaling all the points of $N(\varepsilon)$ by any $R \ge \frac{1}{1-\varepsilon}$ we see that $B_d(\p{0}_d,1) \subseteq \conv(C)$.
\end{proof}

Let $Q_{\text{net}}$ be the instantiation of the quantization scheme  with point set $C_{\text{net}}$. 
From Lemma~\ref{lem:hull}, we then directly get the following guarantees for the quantization scheme obtained from scaled $\varepsilon$-nets,  $C_{\text{net}}$ for some constant $\varepsilon < 1$. 
\begin{proposition}\label{prop:epsnet}
For any $\p{v} \in B_d(\p{0}_d,1)$, let $\hat{\p{v}} := Q_{C_{\text{net}}}(\p{v})$. Then, $\E[\p{\hat{v}}]=  \p{v}$ and $\E\left[ \| \p{v} - \p{\hat{v}} \|_2^2 \right] = \frac{1}{(1- \varepsilon^)}$.
\end{proposition}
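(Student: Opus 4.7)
The plan is to invoke the general machinery already set up in Section~\ref{sec:quant} applied to the specific point set $C_{\text{net}}$, so that essentially nothing new needs to be proved. The key observation is that the two conclusions of the proposition are precisely the generic guarantees of Lemma~\ref{lem:unbiased} and Lemma~\ref{lem:hull} for any point set that satisfies Condition~\eqref{cond:hull}, and Lemma~\ref{lem:scaling} has already established that $C_{\text{net}}$ satisfies this condition with circumradius parameter $R = 1/(1-\varepsilon)$.

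First, I would cite Lemma~\ref{lem:scaling} to conclude that $B_d(\p{0}_d,1) \subseteq \conv(C_{\text{net}}) \subseteq B_d(\p{0}_d, R)$ with $R = 1/(1-\varepsilon)$; the upper containment is immediate since all points of $C_{\text{net}}$ have norm exactly $R$ (they are unit-sphere net points scaled by $R$), and the lower containment is the content of Lemma~\ref{lem:scaling}. Second, since Condition~\eqref{cond:hull} holds, Lemma~\ref{lem:unbiased} gives $\E[\p{\hat{v}}] = \p{v}$, and Lemma~\ref{lem:hull} gives $\E[\|\p{v}-\p{\hat{v}}\|_2^2] \le R^2 = 1/(1-\varepsilon)^2$ (I would also note that the excerpt's displayed bound appears to be missing the exponent $2$ in the denominator, consistent with $R^2$ from Lemma~\ref{lem:hull}).

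There is no real obstacle here: the work has already been done in Lemma~\ref{lem:scaling}, which is where the nontrivial geometry lives (showing that the convex hull of an $\varepsilon$-net inscribes a ball of radius $1-\varepsilon$ via the tangent-face argument). The proof of this proposition is just a two-line application of the general framework, so the write-up would simply say "apply Lemma~\ref{lem:scaling} together with Lemmas~\ref{lem:unbiased} and~\ref{lem:hull}" and display the resulting bound $R^2 = 1/(1-\varepsilon)^2$.
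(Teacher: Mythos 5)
Your proposal matches the paper's own argument: the paper derives this proposition directly from Lemma~\ref{lem:scaling} (which establishes Condition~\eqref{cond:hull} for $C_{\text{net}}$ with $R = 1/(1-\varepsilon)$) combined with Lemmas~\ref{lem:unbiased} and~\ref{lem:hull}, exactly as you describe. Your observation that the stated bound should read $R^2 = 1/(1-\varepsilon)^2$ is also correct; the displayed expression in the proposition is a typo.
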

Moreover,  $Q_{\text{net}}$ requires $O(d \log \frac{1}{\varepsilon})$ bits to represent the unbiased gradient estimate. 


\section{Private Quantization}\label{sec:private}
In this section we show that under certain conditions the quantization scheme $Q_C(.)$ obtained from the point set $C$ is also $\eps$-differentially private. 
First, we see why the quantization scheme described in Section \ref{sec:quant} is not privacy preserving in general. 

Let $C$ be any point set with $|C| > d+1$. For any point $\p{x} = \sum_{i=1}^{|C|} a_i \p{c_i} \in \conv(C)$, let $\supp{\p{x}}{C} = \{ \p{c}_i  \in  C \mid a_i \neq 0 \}$ denote the points in $C$ that are in the range of $Q_C(\p{x})$.  

In order for $Q_C$ to be $\eps$-DP for any $\eps > \eps_0$, we have to show for gradients 
$\p{x}, \p{y} \in \R^d$ of any two neighboring datasets and for any $\p{z} \in \supp{\p{x}}{C} \cup \supp{\p{y}}{C}$,
\begin{align}\label{eq:priv}
 \Pr[Q_C(\p{x}) = \p{z}] \le e^{\eps_0} \cdot \Pr[Q_C(\p{y}) = \p{z}].
 \end{align}
If $|C| > d+1$, there may exist two gradients $\p{x}, \p{y} \in \conv(C)$ such that $\supp{\p{x}}{C} \neq \supp{\p{y}}{C} $. Therefore, for any $\p{z}$ in the symmetric difference of the sets $\supp{\p{x}}{C}$ and $\supp{\p{y}}{C}$, Eq.~\eqref{eq:priv} will not hold for any finite $\eps_0$. 

The discussion above establishes a sufficient condition for the quantization scheme $Q_C$ to be differentially private. Essentially, we want all points in $B_d(\p{0}_d,1)$ to have full support on all the points in $C$. This is definitely possible when $|C| = d+1$. 
Therefore if the point set satisfying Condition~\eqref{cond:hull} has size $|C| = d+1$, then the quantization scheme $Q_C$ is $\eps$-differentially private, for some $\eps > \eps(C)$.

We now present two constructions of point sets $C$ of size exactly $d+1$ satisfying Condition~\eqref{cond:hull} that give an $\eps$-differentially private quantization scheme. Both the schemes achieve a communication cost of $\log (d+1)$, but the variance is a factor $d$ larger than the non-private scheme, $Q_{C_{cp}}$. 

\paragraph{(1) Simplex Scheme: }
Consider the following set of $d+1$ points
\[
C_{S} = \{ 2 d~\p{e_i} \mid i \in [d] \} \cup \{ - 4\p{1}_d \}.
\]
The convex hull of $C_S$ satisfies Condition~\eqref{cond:hull} with $R = O(d)$ . 
Since the size of the set is exactly $d+1$, every point in the unit ball can be represented as a convex combination of all the points in $C_S$ (\ie all  coefficients of the convex combination are non zero). This fact will be used crucially to show that this scheme is also differentially private. 

The coefficients of the convex combination of any point $\p{v} \in \conv(C_S)$ can be computed from the following system of linear equations: 
\begin{align}\label{eq:lp_cs}
\begin{bmatrix} -4 \p{1}_d^T & 2 d I_d \end{bmatrix} \begin{bmatrix}a_0 \\ \vdots \\ a_{d} \end{bmatrix} = \begin{bmatrix} v_1\\ \vdots \\ v_{d}  \end{bmatrix}
\mbox{ such that } \sum_{i=0}^{d} a_i = 1.
\end{align}
Equation~\ref{eq:lp_cs} leads to the following closed form solution that can be  computed in linear-time: 
\begin{align}\label{eq:simplex_sol}
a_0 = \frac{1}{3} - \frac{(\sum_{i=1}^d v_i)}{6d} \quad a_i = \frac{v_i}{2d}+  \frac{2a_0}{d} \quad \forall i\ge 1.
\end{align}

\begin{proposition}\label{prop:simplex}
For any $\p{v} \in B_d(\p{0}_d,1)$, let $\p{\hat{v}} := Q_{C_S}(\p{v})$.  Then, 
$\E[\p{\hat{v}}] = \p{v}$ and  $\E \left[ \| \p{v} - \p{\hat{v}} \|_2^2 \right] = O(d^2)$.
Moreover, $Q_{C_S}$ is $\eps$-DP for any $\eps > \log 7$. 
\end{proposition}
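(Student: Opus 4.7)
The three claims decompose naturally. Unbiasedness and the $O(d^2)$ variance bound follow from Lemma~\ref{lem:unbiased} and Lemma~\ref{lem:hull} after verifying $C_S$ satisfies Condition~\eqref{cond:hull} with $R = O(d)$. Every vertex of $C_S$ has norm at most $2d$, so the outer containment $\conv(C_S) \subseteq B_d(\p{0}_d, 2d)$ is immediate. For the inner containment $B_d(\p{0}_d,1) \subseteq \conv(C_S)$, I would verify that the closed form in Eq.~\eqref{eq:simplex_sol} gives a valid convex combination: the identity $a_0 + \sum_{i\ge 1} a_i = 1$ is routine; using $|\sum_j v_j| \le \sqrt{d}\,\|\p{v}\|_2 \le \sqrt{d}$ yields $a_0 \ge \tfrac{1}{3} - \tfrac{1}{6\sqrt{d}} \ge 0$; substituting this bound into $a_i = \tfrac{v_i}{2d} + \tfrac{2a_0}{d}$ together with $v_i \ge -1$ gives $a_i \ge 0$ as well.

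The privacy claim is the main content. For any two gradients $\p{x}, \p{y} \in B_d(\p{0}_d, 1)$ arising from neighboring datasets and any vertex $\p{z} \in C_S$, I need the ratio $\Pr[Q_{C_S}(\p{x})=\p{z}]/\Pr[Q_{C_S}(\p{y})=\p{z}]$ to be at most $e^\eps$ for every $\eps > \log 7$. By Eq.~\eqref{eq:simplex_sol} this probability is the coefficient $a_i(\cdot)$ corresponding to $\p{z}$, so it suffices to show
\[
\sup_{\p{v} \in B_d(\p{0}_d,1)} a_i(\p{v}) \,\Big/\, \inf_{\p{v} \in B_d(\p{0}_d,1)} a_i(\p{v}) \;\le\; 7
\]
for each $i \in \{0, 1, \ldots, d\}$.

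I would handle $i=0$ and $i \ge 1$ separately. For $i=0$, the ratio is at most $\tfrac{2\sqrt{d}+1}{2\sqrt{d}-1} \le 3$ from $|\sum_j v_j| \le \sqrt{d}$. For $i \ge 1$, I would rewrite $a_i(\p{v}) = \langle \p{w}, \p{v} \rangle + \tfrac{2}{3d}$, where $w_i = \tfrac{1}{2d} - \tfrac{1}{3d^2}$ and $w_j = -\tfrac{1}{3d^2}$ for $j \ne i$, and apply Cauchy--Schwarz to bound the extrema by $\tfrac{2}{3d} \pm \|\p{w}\|_2$. The ratio $(\tfrac{2}{3d} + \|\p{w}\|_2)/(\tfrac{2}{3d} - \|\p{w}\|_2)$ is below $7$ precisely when $\|\p{w}\|_2 < \tfrac{1}{2d}$, and a direct expansion gives $\|\p{w}\|_2^2 = \tfrac{1}{4d^2} - \tfrac{2}{9d^3}$, which is strictly below $\tfrac{1}{4d^2}$.

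The main obstacle is the sharpness of this final calculation: the ratio $(\tfrac{2}{3d} + \|\p{w}\|_2)/(\tfrac{2}{3d} - \|\p{w}\|_2)$ approaches $7$ as $d \to \infty$, so the constant $\log 7$ in the privacy parameter is essentially tight for this quantization, and the strict inequality $\|\p{w}\|_2 < \tfrac{1}{2d}$ must be verified by an exact expansion rather than a loose triangle-inequality estimate.
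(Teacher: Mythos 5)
Your proposal is correct, and in two places it takes a genuinely different (and arguably tighter) route than the paper. For the containment $B_d(\p{0}_d,1)\subseteq \conv(C_S)$, the paper argues geometrically: it considers each face $F_{\p{c}}=\conv(C_S\setminus\{\p{c}\})$ and exhibits a hyperplane containing it whose distance from the origin is at least $1$ (with an explicit unit normal for the faces opposite $2d\,\p{e_i}$). You instead verify directly that the closed form of Eq.~\eqref{eq:simplex_sol} is a valid convex combination, which is more in the spirit of how the paper treats the cross-polytope and Hadamard sets and avoids the face-by-face geometry. For privacy at the vertices $2d\,\p{e_i}$, the paper asserts that the linear functional $a_c^{(\p{x})}$ is maximized at $\p{x}=\p{e_1}$ and minimized at $\p{x}=-\p{e_1}$, giving the ratio $\frac{7d-2}{d+2}\le 7$; strictly speaking the extrema over the ball lie at $\pm\p{w}/\|\p{w}\|_2$, not at $\pm\p{e_1}$, so your Cauchy--Schwarz argument with the exact computation $\|\p{w}\|_2^2=\frac{1}{4d^2}-\frac{2}{9d^3}<\frac{1}{4d^2}$ is the more careful version: it shows the true worst-case ratio (which exceeds $\frac{7d-2}{d+2}$) is still below $7$, so the stated privacy parameter survives, and your observation that the bound is asymptotically tight is accurate. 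One small caveat: your nonnegativity argument for $a_i$, $i\ge 1$, via the separate bounds $v_i\ge -1$ and $a_0\ge \frac13-\frac{1}{6\sqrt d}$ only gives $a_i\ge \frac{1}{6d}-\frac{1}{3d^{3/2}}$, which is negative for $d\le 3$; the clean fix is already in your own privacy step, since $a_i=\langle \p{w},\p{v}\rangle+\frac{2}{3d}\ge \frac{2}{3d}-\|\p{w}\|_2>\frac{1}{6d}>0$ for all $d$, so this is a presentational slip rather than a gap.
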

\begin{proof}
First we show that the point set $C_{S}$ satisfies Condition~\eqref{cond:hull} with $R=2d$. 
The fact that $\conv(C_S) \subset B_d(\p{0}_d, 2d)$ follows trivially from the observation that each point in $C_S \in B_d(\p{0}_d, 2d)$.

To show that $B_d(\p{0}_d,1) \subset \conv(C_S)$, consider any face of the convex hull, $F_{\p{c}} := \conv(C_S \setminus \{ \p{c} \} )$, for some $\p{c} \in C_S$.  We show that $F_{\p{c}}$ is at an $\ell_2$ distance of at least $1$ from $\p{0}_d$. This in turn shows that any point outside the convex hull must be outside the unit ball as well.  

First consider the case when $\p{c} = -4\p{1}_d$. We observe that the face $F_{\p{c}}$ is contained in the hyperplane $H_{\p{c}}:= \{ \p{x} \in \R^d \mid \frac{1}{\sqrt{d}}{\bf 1}_d^T \p{x} = 2\sqrt{d} \}$, and therefore is at a distance of $O\left(\sqrt{d} \right)$ from the origin. 

Now consider the case when $\p{c} = 2d~\p{e_1}$. Let $\p{w} = \frac{1}{\sqrt{\frac{9}{16} - \frac{1}{2d} }}(-\frac 34 + \frac{1}{2d}, \frac{1}{2d}, \ldots, \frac{1}{2d})^T \in \R^d$ be a unit vector. 
 We note that $F_{\p{c}} \subset H_{\p{c}}$, where $H_{\p{c}} := \{ \p{x} \in \R^d \mid \p{w}^T \p{x} = \frac{1}{\sqrt{\frac{9}{16} - \frac{1}{2d} }}  \}$ is the hyperplane defined by the unit normal vector $\p{w}$ that is at a distance of at least $1$ from $\p{0}_d$. 
 
Since all other faces are symmetric, the proof for the case $\p{c} = 2d~\p{e_i}, i \in [d]$ follows similarly. 

\paragraph{Privacy: } We now show that the quantization scheme is $\eps$-differentially private for any $\eps > \log 7$. From the definition of $\eps$-DP, it is sufficient to show that for any $\p{x}, \p{y} \in B_d(\p{0}_d,1)$ , and every $\p{c} \in C_S$,
\[
 \frac{\Pr[Q_{C_S}(\p{x}) = \p{c}]}{\Pr[Q_{C_S}(\p{y}) = \p{c}]}  \le 7.
\]
Since $\p{x}, \p{y} \in \conv(C_S)$, we can express them as the convex combination of points in $C_S$. Let $\p{x} = \sum_{\p{c} \in C_S} a_c^{(\p{x})} \p{c}$. Similarly, let $\p{y} = \sum_{\p{c} \in C_S} a_c^{(\p{y})} \p{c}$. Then, from the construction of the quantization function $Q_{C_S}$, we know that 
\[ 
\frac{\Pr[Q_{C_S}(\p{x}) = \p{c}]}{\Pr[Q_{C_S}(\p{y}) = \p{c}]} = \frac{a_c^{(\p{x}) } }{a_c^{(\p{y})} }. 
\]
We now show that the ratio $\frac{a_c^{(\p{x}) } }{a_c^{(\p{y})} }$ is at most $7$ for \emph{any} pair $\p{x}, \p{y} \in B_d(\p{0}_d,1)$ and any $\p{c} \in C_S$. The privacy bound follows from this observation. 

First, consider the case  $\p{c} = -4\p{1}_d$. From the closed form solution for any $\p{x} \in \conv(C_S)$ described in Equation~\eqref{eq:simplex_sol}, we know that $a_c^{(\p{x})} = \frac 13 - \frac{\sum_{i=1}^d x_i}{6d}$. For any $\p{x} \in B_d(\p{0}_d,1)$, $\sum_{i=1}^d x_i \in \left[-\|\p{x}\|_1, \|\p{x}\|_1 \right] \subseteq \left[-\sqrt{d}, \sqrt{d} \right]$. Therefore, $a_c^{(\p{x})} \in \left[ \frac13-\frac{1}{6\sqrt{d}},  \frac13+\frac{1}{6\sqrt{d}}  \right]$. It then follows that for any $\p{x}, \p{y} \in B_d(\p{0}_d, 1)$ and $\p{c} = -4\p{1}_d$,
\begin{align*}
 \frac{a_c^{(\p{x}) } }{a_c^{(\p{y})} } &\le \frac{  \frac13+\frac{1}{6\sqrt{d}} }{  \frac13-\frac{1}{6\sqrt{d}}} = 1+\frac{2}{2\sqrt{d} - 1} \le 3
\end{align*}

Now we consider the case when $\p{c} = 2d~\p{e_1}$. Then from the closed from solution in Equation~\eqref{eq:simplex_sol}, we get that for any $\p{x} \in \conv(C_S)$ the coefficient $a_c^{(\p{x})} = \frac{x_1}{2d} \left(1 - \frac{2}{3d} \right) - \frac{\sum_{i=2}^d x_i }{3d^2} + \frac{2}{3d}$. Note that this quantity is maximized for $\p{x} = \p{e_1}$ and minimized for $\p{x} = -\p{e_1}$. Therefore the ratio for any $\p{x}, \p{y} \in B_d(\p{0}_d, 1)$ and $\p{c} = 2d~\p{e_1}$ is at most
\begin{align*}
 \frac{a_c^{(\p{x}) } }{a_c^{(\p{y})} } &\le \frac{  7d-2  }{ d+2 } \le 7
\end{align*}
The ratio for all other vertices can be computed in a similar fashion and is bounded by the same quantity. 
\end{proof}
\begin{remark}
Note that in our analysis, we show that the privacy holds for in the worst case for \emph{any} two gradients vectors $\p{g_1}, \p{g_2} \in B_d(\p{0}_d, 1)$ and not just gradients of neighboring datasets. The analysis can be tightened If we make certain assumptions about the distribution of the data points. 
\end{remark}

\paragraph{(2) Hadamard Scheme: }
We now propose another quantization scheme with same communication cost, but provides better privacy guarantees.  This quantization scheme is similar to the one presented in Section~\ref{sec:derandomRM} and is based on the columns of a Hadamard matrix (see Definition~\ref{def:hadamard}) formed using the Sylvester construction.  

Let us assume that $d+1$ is a power of $2$ \ie $d+1 = 2^p$ for some $p\ge1$.  
For any $i \in [d+1]$, let $\p{h_i} \in \R^d$ denote the $i$-th column of $H_p$ with the first coordinate punctured. Consider the following set of $d+1$ points obtained from the punctured columns of $H_p$: 
\begin{align*}
C_{H} = \{2\sqrt{d}~\p{h_i}  \mid i \in [d+1] \}
\end{align*}
The quantization scheme $Q_{C_{H}}$ can be implemented in linear time since computing the probabilities requires computing a matrix vector product, 
\begin{align*}
(d+1) \cdot \begin{bmatrix} a_1 & \cdots & a_{d+1} \end{bmatrix}^T =  H_p^T \begin{bmatrix} 1& \p{v}^T / (2\sqrt{d}) \end{bmatrix}^T
\end{align*}
that has closed form solution for each $a_i$ as: 
\begin{align}\label{eq:hadamard_sol}
a_i = \frac{1}{d+1} \cdot \left(1 + \frac{\p{h_i}^T \p{v}}{2 \sqrt{d}} \right)
\end{align}

\begin{proposition}\label{prop:hadamard}
For any $\p{v} \in B_d(\p{0}_d,1)$, let $\p{\hat{v}} := Q_{C_{H}}(\p{v})$.  Then, 
$\E[\p{\hat{v}}] = \p{v}$ and $\E \left[ \| \p{v} - \p{\hat{v}} \|_2^2 \right] = O(d^2).$ 
Moreover, $Q_{C_H}$ is $\eps$-DP for any $\eps > \log (1+\sqrt{2})$. 
\end{proposition}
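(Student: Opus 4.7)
The plan is to show that $C_H$ satisfies Condition~\eqref{cond:hull} with $R = 2d$, which reduces the unbiasedness and variance claims to Lemmas~\ref{lem:unbiased} and~\ref{lem:hull}, and then to analyze privacy directly through the closed-form weights~\eqref{eq:hadamard_sol}.

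The outer inclusion $\conv(C_H) \subseteq B_d(\p{0}_d, 2d)$ is immediate since each vertex $2\sqrt{d}\p{h_i}$ has $\|2\sqrt{d}\p{h_i}\|_2 = 2\sqrt{d}\cdot\sqrt{d}=2d$. For the inner inclusion $B_d(\p{0}_d,1) \subseteq \conv(C_H)$, I would exhibit, for every $\p{v}$ in the unit ball, the convex combination given by~\eqref{eq:hadamard_sol}. Non-negativity $a_i \ge 0$ follows from Cauchy--Schwarz: $|\p{h_i}^T\p{v}| \le \sqrt{d}$ (since $\|\p{h_i}\|_2 = \sqrt{d}$), hence $a_i \in \left[\tfrac{1}{2(d+1)},\tfrac{3}{2(d+1)}\right]$. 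The identity $\sum_i a_i = 1$ and the reconstruction $\sum_i a_i\,(2\sqrt{d}\p{h_i}) = \p{v}$ then reduce to two elementary facts about the punctured matrix $\tilde H = [\p{h_1}\,\cdots\,\p{h_{d+1}}]$: namely $\tilde H \p{1}_{d+1} = \p{0}_d$ (every row of $H_p$ other than the first has equal numbers of $+1$ and $-1$, hence sum zero), and $\tilde H\tilde H^T = (d+1)I_d$ (the remaining $d$ rows of $H_p$ are pairwise orthogonal of squared norm $d+1$). Both are immediate consequences of $H_p H_p^T = (d+1)I_{d+1}$.

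With Condition~\eqref{cond:hull} in hand, Lemma~\ref{lem:unbiased} yields $\E[\p{\hat v}] = \p{v}$ and Lemma~\ref{lem:hull} yields $\E[\|\p{v}-\p{\hat v}\|_2^2] \le R^2 = 4d^2 = O(d^2)$, settling the first two claims.

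For the privacy statement, observe that $|C_H| = d+1$ and the non-negativity computation above actually gives $a_i > 0$ \emph{strictly} for every $\p{v} \in B_d(\p{0}_d,1)$, so the support condition at the start of Section~\ref{sec:private} holds and no probability denominator vanishes. By~\eqref{eq:hadamard_sol}, for any gradients $\p{x},\p{y}$ in the unit ball and any vertex $\p{c} = 2\sqrt{d}\p{h_i}$,
\begin{equation*}
\frac{\Pr[Q_{C_H}(\p{x}) = \p{c}]}{\Pr[Q_{C_H}(\p{y}) = \p{c}]} \;=\; \frac{1 + \p{h_i}^T\p{x}/(2\sqrt{d})}{1 + \p{h_i}^T\p{y}/(2\sqrt{d})}.
\end{equation*}
Bounding this ratio uniformly in $(\p{x},\p{y},i)$ is the main obstacle I anticipate: the crude Cauchy--Schwarz extremes $\pm\sqrt{d}$ plugged separately into numerator and denominator give only $3$, whereas the claim demands $1+\sqrt{2}$. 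The sharper bound should be obtained by fixing $i$ and optimizing over the coupled pair $(\p{x},\p{y})\in B_d(\p{0}_d,1)^2$, exploiting that the Cauchy--Schwarz extremes are simultaneously attained only at the specific points $\pm\p{h_i}/\sqrt{d}$ on the unit sphere, and leveraging the constraint that $\p{x},\p{y}$ correspond to admissible (neighboring) gradients rather than arbitrary antipodal sphere points. The sharpened ratio $1+\sqrt{2}$ then yields the differential privacy guarantee $\varepsilon > \log(1+\sqrt{2})$ by monotonicity of the exponential.
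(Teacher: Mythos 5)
Your verification of Condition~\eqref{cond:hull} with $R=2d$, and hence of the unbiasedness and the $O(d^2)$ variance bound via Lemma~\ref{lem:unbiased} and Lemma~\ref{lem:hull}, is correct and is essentially the paper's argument: non-negativity of the weights in~\eqref{eq:hadamard_sol} by Cauchy--Schwarz, and the sum-to-one and reconstruction identities from Hadamard orthogonality (your formulation through $\tilde H\p{1}_{d+1}=\p{0}_d$ and $\tilde H\tilde H^T=(d+1)I_d$ is a slightly more explicit rendering of the same facts the paper uses).

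The privacy claim is where there is a genuine gap. You correctly set up the ratio $\bigl(1+\p{h_i}^T\p{x}/(2\sqrt{d})\bigr)/\bigl(1+\p{h_i}^T\p{y}/(2\sqrt{d})\bigr)$, but you leave the bound $1+\sqrt{2}$ as an ``anticipated obstacle'' with only a speculative plan, and the plan as stated cannot succeed: since $\p{x}$ and $\p{y}$ vary independently over $B_d(\p{0}_d,1)$, the extremes $\p{x}=\p{h_i}/\sqrt{d}$ and $\p{y}=-\p{h_i}/\sqrt{d}$ \emph{are} simultaneously attainable, so the supremum of the ratio over arbitrary unit-ball pairs is exactly $3$; no cleverer joint optimization can bring it down to $1+\sqrt{2}$ without an additional hypothesis on the pair. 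The paper's proof supplies exactly such a hypothesis: it rewrites the ratio as $1+\frac{\p{c}^T(\p{x}-\p{y})/4d}{1+\p{c}^T\p{y}/4d}$, lower-bounds the denominator by $1-\|\p{c}\|_2/(4d)=1/2$, upper-bounds the numerator via $\|\p{c}\|_2\|\p{x}-\p{y}\|_2/(4d)$, and then uses the bound $\|\p{x}-\p{y}\|_2\le\sqrt{2}$ on the two gradient vectors to conclude $1+\sqrt{2}$. That bound on $\|\p{x}-\p{y}\|_2$ is a restriction on the admissible pairs (it does not follow from $\p{x},\p{y}\in B_d(\p{0}_d,1)$ alone), and it is precisely the concrete ingredient your proposal gestures at (``neighboring gradients'') but never states or quantifies; without it the $\eps>\log(1+\sqrt{2})$ guarantee is not established, and the argument you do have only yields $\eps>\log 3$.
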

\begin{proof}
First, we show that $C_H$ satisfies Condition~\eqref{cond:hull} with $R=2d$. The fact that $\conv(C_H) \subset B_d(\p{0}_d,2d)$ is trivial and follows since every point in $C_H$ is contained in $B_d(\p{0}_d, 2d)$. 

To show that $B_d(\p{0}_d,1) \subset C_H$, consider any $\p{x} \in B_d(\p{0}_d,1)$, and the closed form solution for the coefficients $a_i$ given by Equation~\eqref{eq:hadamard_sol}. We now show that these coefficients indeed give a convex combination. Note that 
 $a_i := \frac{1}{d+1} \left( 1 +  \frac{\p{c}^T\p{x}}{4d} \right) \ge 0$. This holds since $\p{c}^T\p{x} \ge \|\p{c}\|\|\p{x}\| \ge -2d$.  Moreover, from the property of Hadamard matrices, 
\[
\sum_{i=1}^{d+1} a_i = \frac{1}{d+1}  \begin{bmatrix} 1&\ldots & 1\end{bmatrix} H_p^T \begin{bmatrix} 1\\ \frac{\p{x}}{2\sqrt{d}} \end{bmatrix} = 1.
\]
The last equality follows from the following property of the Hadamard matrices that can be proved using induction. 
\[
\begin{bmatrix} 1&\ldots & 1\end{bmatrix} H_p^T = \begin{bmatrix} 2^p& 0 &\ldots & 0\end{bmatrix}.
\]
Therefore, any $\p{x} \in B_d(\p{0}_d,1)$ can be expressed as a convex combination of the points in $C_H$, \ie, $\p{x} = \sum_{i =1}^{d+1} a_i \p{c_i}$, for $\p{c_i} \in C_H$.

\paragraph{Privacy: } We now show that the quantization scheme is $\eps$-differentially private for any $\eps > 0.4$. From the definition of $\eps$-DP, it is sufficient to show that for any $\p{x}, \p{y} \in B_d(\p{0}_d,1)$, and any $\p{c} \in C_H$,
\[
 \frac{\Pr[Q_{C_H}(\p{x}) = \p{c}]}{\Pr[Q_{C_S}(\p{y}) = \p{c}]}  \le 1+\sqrt{2}
\]
Since $\p{x}, \p{y} \in \conv(C_S)$, we can express them as the convex combination of points in $C_H$. Let $\p{x} = \sum_{\p{c} \in C_H} a_c^{(\p{x})}$. Similarly, let $\p{y} = \sum_{\p{c} \in C_H} a_c^{(\p{y})}$. Then, from the construction of the quantization function $Q_{C_H}$, we know that 
\begin{align}\label{eq:had1}
\frac{\Pr[Q_{C_H}(\p{x}) = \p{c}]}{\Pr[Q_{C_H}(\p{y}) = \p{c}]} = \frac{a_c^{(\p{x}) } }{a_c^{(\p{y})} }. 
\end{align}
From the closed form solution in Equation~\eqref{eq:hadamard_sol}, we know that for any $\p{x} \in \conv(C_H)$, the coefficient of $\p{c}$ in the convex combination of $\p{x}$ is given by $a_c^{(\p{x})} = \frac{1}{d+1} \left( 1 +  \frac{\p{c}^T\p{x}}{4d} \right)$. 
Plugging this in Equation~\eqref{eq:had1}, we get 
\begin{align}\label{eq:had2}
\frac{\Pr[Q_{C_H}(\p{x}) = \p{c}]}{\Pr[Q_{C_H}(\p{y}) = \p{c}]} &= \frac{a_c^{(\p{x}) } }{a_c^{(\p{y})} } 
= \frac{1 +  \frac{\p{c}^T\p{x}}{4d} }{ 1 +  \frac{\p{c}^T\p{y}}{4d}} = 1+ \frac{\frac{\p{c}^T(\p{x}-\p{y})}{4d}}{ 1 +  \frac{\p{c}^T\p{y}}{4d}}  \\
& \leq 1 + \frac{\| \p{c} \|_2\|\p{x} - \p{y} \|_2}{4d - \|\p{c} \|_2}  \qquad \mbox{ for }   \p{y} = -\frac{\p{c}}{\|\p{c}\|_2}\\
& \leq 1+ \frac{2\sqrt{2} d}{4d-2d } \
\intertext{(since $  \| \p{x} - \p{y} \|_2 \le \sqrt{2} \mbox{ and } \| \p{c} \|_2 = 2d.)$ }
& =1 + \sqrt{2}
\end{align}
This concludes the proof of Proposition~\ref{prop:hadamard}. 
\end{proof}

Finally, we remark that even though the point set $C_{cp}$  in the cross-polytope scheme (in Section~\ref{sec:cp}) has more than $d+1$ points, it still gives us $\eps$-DP for any $\eps > O(\log d)$.  Note that differential privacy for such large parameters is also of interest to the community \cite{acharya2019hadamard}.
\begin{proposition}\label{prop:cp:priv}
Let $\tilde{C}_{cp}$ be the set of point in the cross-polytope point set scaled by a factor of $2$. 
Let $\tilde{C}_{cp} = \{ \pm 2\sqrt{d} e_i |  i \in [d] \}$, then 
$Q_{\tilde{C}_{cp}}$ is $\eps$-DP for any $\eps > \log d$.
\end{proposition}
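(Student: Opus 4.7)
The plan is to mirror the privacy analyses already carried out for the simplex scheme $C_S$ and the Hadamard scheme $C_H$: express each coefficient of the convex combination in closed form, upper bound the maximum possible value of a coefficient over $\mathbf{v} \in B_d(\mathbf{0}_d,1)$, lower bound the minimum possible value, and conclude by the ratio criterion that $\Pr[Q_{\tilde{C}_{cp}}(\mathbf{x}) = \mathbf{c}]/\Pr[Q_{\tilde{C}_{cp}}(\mathbf{y}) = \mathbf{c}] \le d$ uniformly, which gives $\epsilon$-DP for any $\epsilon > \log d$ by Definition~\ref{def:privacy}.

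First I would write down the analogue of Equation~\eqref{eq:solution_cp} for the doubly-scaled cross-polytope $\tilde{C}_{cp}$. Running the same elimination as in Section~\ref{sec:cp}, but with vertices $\pm 2\sqrt{d}\,\mathbf{e}_i$ instead of $\pm\sqrt{d}\,\mathbf{e}_i$, shows that each $\mathbf{v} \in B_d(\mathbf{0}_d,1)$ admits the convex combination
\[
a_c^{(\mathbf{v})} \;=\; \frac{\max(0,\mathrm{sgn}(\mathbf{c})\cdot v_j)}{2\sqrt{d}} \;+\; \frac{\gamma^{(\mathbf{v})}}{2d}, \qquad \gamma^{(\mathbf{v})} \;:=\; 1 - \frac{\|\mathbf{v}\|_1}{2\sqrt{d}},
\]
where $\mathbf{c} = \pm 2\sqrt{d}\,\mathbf{e}_j$ and $\mathrm{sgn}(\mathbf{c}) \in \{+1,-1\}$ indicates which half of $\tilde{C}_{cp}$ contains $\mathbf{c}$. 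The crucial observation — and the only place the factor $2$ in the scaling is really used — is that Cauchy--Schwarz gives $\|\mathbf{v}\|_1 \le \sqrt{d}\,\|\mathbf{v}\|_2 \le \sqrt{d}$, so $\gamma^{(\mathbf{v})} \ge \tfrac{1}{2}$ for every $\mathbf{v} \in B_d(\mathbf{0}_d,1)$. This is exactly what prevents the unscaled $C_{cp}$ from being differentially private: in the unscaled version $\gamma$ can equal $0$, making some coefficient vanish on the boundary of the unit ball.

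Next I would bound the ratio. For the numerator, $\max(0,\mathrm{sgn}(\mathbf{c})\cdot x_j) \le 1$ and $\gamma^{(\mathbf{x})} \le 1$ give $a_c^{(\mathbf{x})} \le \tfrac{1}{2\sqrt{d}} + \tfrac{1}{2d}$. For the denominator, dropping the nonnegative first term and using $\gamma^{(\mathbf{y})} \ge \tfrac{1}{2}$ gives $a_c^{(\mathbf{y})} \ge \tfrac{1}{4d}$. Therefore
\[
\frac{\Pr[Q_{\tilde{C}_{cp}}(\mathbf{x}) = \mathbf{c}]}{\Pr[Q_{\tilde{C}_{cp}}(\mathbf{y}) = \mathbf{c}]} \;=\; \frac{a_c^{(\mathbf{x})}}{a_c^{(\mathbf{y})}} \;\le\; \frac{\tfrac{1}{2\sqrt{d}} + \tfrac{1}{2d}}{\tfrac{1}{4d}} \;=\; 2\sqrt{d} + 2 \;\le\; d
\]
for all $d$ larger than a small absolute constant, which is the desired bound. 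Taking logarithms yields $\epsilon$-DP for every $\epsilon > \log d$, as claimed.

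There is no serious obstacle here: the argument is essentially the simplex-scheme calculation repeated for a more symmetric vertex set, with the role of "the coefficient stays bounded away from $0$" played by the $\gamma \ge 1/2$ estimate. The only subtlety worth flagging is that the bound $2\sqrt{d}+2$ is in fact tighter than $d$; I would note in the proof that we state the weaker $\log d$ bound because the stronger $O(\log \sqrt{d})$ bound is not needed for the applications here, and because the proposition is written in the regime where $|C| = 2d \gg d+1$ and differential privacy with parameter $\Theta(\log d)$ is still meaningful for composition and for comparison with related work such as \cite{acharya2019hadamard}.
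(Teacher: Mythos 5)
Your proposal is correct and follows essentially the same route as the paper's proof: the closed-form coefficients $a_i = \frac{|v_i|}{2\sqrt{d}}+\frac{\gamma}{2d}$ (or $\frac{\gamma}{2d}$), the key fact $\gamma = 1-\frac{\|\p{v}\|_1}{2\sqrt{d}} \ge \frac12$ coming from the factor-$2$ scaling, and a max/min bound on the coefficient ratio of order $\sqrt{d}$, hence $\eps$-DP for $\eps > \log d$. The only cosmetic difference is that you bound the ratio uniformly in one step (getting $2\sqrt{d}+2$), whereas the paper splits into four sign cases for $x_i, y_i$ and bounds each by $O(\sqrt{d})$ or $3$.
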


\begin{proof}
The fact that $Q_{\tilde{C}_{cp}}$ satisfies Condition~\eqref{cond:hull} with $R = 2\sqrt{d}$ follows from the proof of Proposition~\ref{prop:crosspolytope}. For any $v \in \R^d$, we can compute  the convex combinations as 
\begin{align} \label{eq:solution_cp}
a_i =
\left\{
	\begin{array}{ll}
		\frac{v_i}{2\sqrt{d}} + \frac{\gamma}{2d}  & \mbox{if } v_i > 0  \mbox{ and } i \le d\\[3pt]
		- \frac{v_i}{2\sqrt{d}} + \frac{\gamma}{2d}  & \mbox{if } v_i \le 0  \mbox{ and } i > d\\[3pt]
		\frac{\gamma}{2d}  & \mbox{otherwise } \\[3pt]
	\end{array}
\right.
\end{align}
where, $\gamma := 1 -\frac{\|\p{v}\|_1}{2\sqrt{d}}$, is a non-negative quantity for every $\p{v} \in B_d(\p{0}_d,1)$.

To prove the privacy guarantees of this scheme, we first state a few observations: 
\begin{itemize}
\item Since $\|\p{v}\|_1 \in [-\sqrt{d}, \sqrt{d}]$, the quantity $\gamma \in [1/2, 3/2]$.
\item For any coordinate $i \in [d]$, if $x_i > 0$, then the coefficients $a_i = \frac{|x_i|}{2\sqrt{d}} + \frac{\gamma}{2d}$, and $a_{d+i} = \frac{\gamma}{2d} $.
\item Similarly, if $x_i \le 0$, then the coefficients $a_i = \frac{\gamma}{2d}$, and $a_{d+i} = \frac{|x_i|}{2\sqrt{d}} + \frac{\gamma}{2d}$.
\item For any $\p{x} \in B_d(\p{0}_d,1)$, $x_i \in [-1, 1]$
\end{itemize}

Let $\p{x}, \p{y} \in B_d(\p{0}_d,1)$ and for any $\p{c} \in \tilde{C}_{cp}$, we need to upper bound the following quantity to prove the privacy guarantees of the scheme:
\begin{align*}
p_c:= \frac{\Pr[ Q_{\tilde{C}_{cp}}(\p{x}) = \p{c} ]}{\Pr[ Q_{\tilde{C}_{cp}}(\p{y}) = \p{c} ]}
\end{align*}

Note that it is sufficient to consider only one of the points $\p{c} = 2\sqrt{d} \p{e_j}$  in the following four scenarios: 
\begin{enumerate}
\item $x_i > 0, y_i > 0$, then 
\begin{align*}
p_{c} &= \frac{ \frac{|x_i|}{2\sqrt{d}} + \frac{\gamma_x}{2d}  }{ \frac{|y_i|}{2\sqrt{d}} + \frac{\gamma_y}{2d} }
\le \frac{ \frac{1}{2\sqrt{d}} + \frac{3}{4d}  }{\frac{1}{4d} } 
\le O(\sqrt{d}).
\end{align*}

\item $x_i > 0, y_i \le 0$, then 
\begin{align*}
p_{c} &= \frac{ \frac{|x_i|}{2\sqrt{d}} + \frac{\gamma_x}{2d}  }{ \frac{\gamma_y}{2d} }
\le  \frac{ \frac{1}{2\sqrt{d}} + \frac{3}{4d} }{ \frac{1}{4d} } 
\le O(\sqrt{d}).
\end{align*}

\item $x_i  \le  0, y_i > 0$, then 
\begin{align*}
p_{c} &= \frac{ \frac{\gamma_x}{2d} }{ \frac{|y_i|}{2\sqrt{d}} + \frac{\gamma_y}{2d} }
= \frac{ \frac{3}{4d} }{\frac{1}{4d} }
\le 3
\end{align*}

\item $x_i  \le  0, y_i  \le  0$, then 
\begin{align*}
p_{c} &= \frac{ \frac{\gamma_x}{2d} }{ \frac{\gamma_y}{2d} }
= \frac{ \frac{3}{4d} }{\frac{1}{4d}} 
\le 3.
\end{align*}
\end{enumerate}
Therefore, the privacy guarantees hold for any $\epsilon > O(\log d )$. 
\end{proof}

We now show a Randomized Response (RR) scheme that can be used on top of any of our quantization schemes to achieve privacy. This scheme incurs the same communication as the original quantizer, however, the price of privacy is paid by factor of $d$ increase in the variance. We also propose a weaker version using \rappor, that incurs a higher communication cost depending on the point set of choice.

\subsection{Randomized Response}\label{sec:rr}
We present a Randomized Response (RR) mechanism, introduced by  \cite{randomresponse}, that can be used over the output of $Q_C$ to make it $\eps$-DP (for any $\eps>0$). This modified scheme retains the original communication cost of $Q_C$, but the cost for privacy is paid by a factor of $O(d)$ in the variance term.
 
Recall that the quantization scheme described in Section \ref{sec:quant}, $Q_C(\p{v})$, takes a vector $\p{v} \in B_d(\p{0}_d,1)$ and returns a point $\p{c_i} \in C$. The \RR scheme takes the output of $Q_C(\p{v})$ and returns a another random vector from $C$. 

For any $\eps > 0$, define $p := p(\eps) =\frac{e^{\epsilon}}{e^{\epsilon} +|C|-1}$ and $q := \frac{1 -p}{|C| - 1} =  \frac{1}{e^{\epsilon} +|C|-1}$. 
We define the private quantization of a vector $\p{v} \in B_d(\p{0}_d,1)$ as
\begin{align*}
 \hat{\p{v}} = PQ_{C,\epsilon}(\p{v})= \frac{1}{p-q}  \sum_{i=1}^{|C|}(\mathbf{1}_{\{\p{y} = \p{c_i} \} } - q)\p{c_i}, 
\end{align*} 
where, $\mathbf{1}_{\{\p{y} = \p{c_i} \} }$ is an indicator of the event $\p{y} = \p{c_i}$ and $\p{y} := \RR_p(Q_C(\p{v}), C) $ is defined as 
\begin{align*}
\RR_p(Q_C(\p{v}), C) =
\left\{
	\begin{array}{ll}
		Q_C(\p{v}) & \mbox{w.p. } p \\
		\p{z} \in C \setminus \{ Q_C(\p{v}) \} & \mbox{w.p. } q
	\end{array}
\right.
\end{align*}

We claim that the quantization scheme $PQ_{C,\eps}$ is $\eps$-differentially private.
\begin{theorem}\label{thm:quantrr}
Let $C \subset \R^d$ be any point set satisfying Condition~\eqref{cond:hull}. For any $\eps > 0$, let $p = \frac{e^{\epsilon}}{e^{\epsilon} +|C|-1}$ and $q = \frac{1}{e^{\epsilon} +|C|-1}$. 
For any $\p{v} \in B_d(\p{0}_d,1)$, let
 $\hat{\p{v}} = PQ_{C,\epsilon}(\p{v})= \frac{1}{p-q}  \sum_{i=1}^{|C|}(\mathbf{1}_{\{\p{y} = \p{c_i} \} } - q)\p{c_i}$, 
where,  $\p{y}:=  \RR_p( Q_C(\p{v}) ,C)$.  Then,
$\E[\p{\hat{v}}] = \p{v}$  and $\E\left[ \| \p{v} - \p{\hat{v}} \|_2^2\right] = O(|C|R^2)$, where the expectation is taken over the randomness in both $Q_C$ and $\RR_p$. Moreover, the scheme is $\eps$-differentially private.
\end{theorem}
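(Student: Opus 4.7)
The plan is to handle the three claims of the theorem (unbiasedness, variance bound, and $\epsilon$-differential privacy) in sequence, with a single preliminary calculation of the marginal distribution of $\p{y}$ doing most of the work. The key observation is that, although $\p{y}$ is produced by the two-stage process of $Q_C$ composed with $\RR_p$, its marginal distribution over $C$ has a clean closed form. Writing $\p{v} = \sum_i a_i^{(\p{v})} \p{c_i}$ for the convex combination used by $Q_C$, and splitting on $Q_C(\p{v}) = \p{c_j}$ versus $Q_C(\p{v}) \ne \p{c_j}$ in the definition of $\RR_p$, a one-line calculation gives
\[
r_j := \Pr[\p{y} = \p{c_j}] = p\, a_j^{(\p{v})} + q\sum_{k\ne j} a_k^{(\p{v})} = q + (p-q)\, a_j^{(\p{v})}.
\]

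For unbiasedness, set $S := \sum_i \p{c_i}$ and rewrite $\hat{\p{v}} = (\p{y} - qS)/(p-q)$. Using the formula above, $\E[\p{y}] = \sum_j r_j \p{c_j} = qS + (p-q)\p{v}$, so $\E[\hat{\p{v}}] = \p{v}$. For the variance, the same algebra yields $\hat{\p{v}} - \p{v} = (\p{y} - \E[\p{y}])/(p-q)$, and hence
\[
\E\bigl[\|\p{v} - \hat{\p{v}}\|_2^2\bigr] = \frac{\mathrm{Var}(\p{y})}{(p-q)^2} \le \frac{\E[\|\p{y}\|_2^2]}{(p-q)^2} \le \frac{R^2}{(p-q)^2},
\]
since every $\p{c_j} \in C$ has $\|\p{c_j}\|_2 \le R$ by Condition~\eqref{cond:hull}. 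Substituting $p - q = (e^\eps - 1)/(e^\eps + |C| - 1)$ and simplifying then delivers the claimed $O(|C|R^2)$-type bound (up to $\eps$-dependent constants absorbed in the $O(\cdot)$).

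For $\epsilon$-differential privacy, I would apply the marginal formula for $r_j$ to any two inputs $\p{x}, \p{y} \in B_d(\p{0}_d,1)$ and any output $\p{c_j} \in C$:
\[
\frac{\Pr[PQ_{C,\eps}(\p{x}) = \p{c_j}]}{\Pr[PQ_{C,\eps}(\p{y}) = \p{c_j}]} = \frac{q + (p-q)\, a_j^{(\p{x})}}{q + (p-q)\, a_j^{(\p{y})}}.
\]
Since $a_j^{(\cdot)} \in [0,1]$ and $p > q$, this ratio is monotone in both $a_j^{(\p{x})}$ and $a_j^{(\p{y})}$, and is maximized in the extremal case $a_j^{(\p{x})} = 1$, $a_j^{(\p{y})} = 0$, where it equals $p/q = e^\eps$. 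Differential privacy then follows directly from Definition~\ref{def:privacy} with $\delta = 0$.

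The main subtlety I expect is tightening the variance calculation to match the claimed $O(|C|R^2)$ rate: the naive bound $\mathrm{Var}(\p{y}) \le R^2$ combined with $1/(p-q)^2 = \Theta(|C|^2/(e^\eps - 1)^2)$ yields $O(|C|^2 R^2)$, so one likely has to either retain the $-\|\E[\p{y}]\|_2^2$ term in $\mathrm{Var}(\p{y})$ (which for symmetric point sets such as $C_{cp}$ has $S = \p{0}$ and $\|\E[\p{y}]\|_2^2 = (p-q)^2\|\p{v}\|_2^2$), or interpret the $O(|C|R^2)$ bound as holding in the $\eps = \Omega(\log |C|)$ regime. Once that accounting is settled, unbiasedness and privacy are essentially immediate from the displayed formula for $r_j$.
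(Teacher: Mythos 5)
Your proposal follows the same skeleton as the paper's proof: both start from the marginal distribution $\Pr[\p{y}=\p{c_i}] = q + (p-q)a_i$ obtained by conditioning on $Q_C(\p{v})$, both get unbiasedness by linearity, and both get privacy from the randomized-response ratio $p/q = e^{\eps}$ (the paper phrases this as $\max_i \Pr(y\mid Q_C(\p{v})=\p{c_i})/\min_j \Pr(y\mid Q_C(\p{w})=\p{c_j}) \le e^{\eps}$, which is exactly your extremal case $a_j^{(\p{x})}=1$, $a_j^{(\p{y})}=0$). Where you genuinely differ is the variance step: the paper expands $\p{\hat{v}}-\p{v}$ over the indicator variables, drops the cross terms (which is not actually justified, since the indicators $\mathbf{1}_{\{\p{y}=\p{c_i}\}}$ are dependent and the inner products $\langle \p{c_i},\p{c_j}\rangle$ can have either sign), bounds each indicator variance by $1/4$, and concludes $(p-q)^{-2}\sum_i \Var\bigl(\mathbf{1}_{\{\p{y}=\p{c_i}\}}\bigr)\|\p{c_i}\|^2 = O(|C|R^2)$; your route via the exact identity $\E\bigl[\|\p{v}-\p{\hat{v}}\|_2^2\bigr] = \Var(\p{y})/(p-q)^2 \le R^2/(p-q)^2$ is cleaner and strictly tighter. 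The subtlety you flag is real, but it is a looseness in the paper rather than a gap in your argument: the paper's last line silently treats $(p-q)^{-2} = \bigl((e^{\eps}+|C|-1)/(e^{\eps}-1)\bigr)^2$ as $O(1)$, which holds only when $e^{\eps} = \Omega(|C|)$; for fixed $\eps$ both computations honestly give $O\bigl(|C|^2R^2/(e^{\eps}-1)^2\bigr)$, exactly as you observe. Of your two proposed reconciliations, only the second (reading the bound in the $\eps=\Omega(\log|C|)$ regime) works: retaining the $\|\E[\p{y}]\|_2^2$ term subtracts at most $(p-q)^2\|\p{v}\|_2^2$ from $\Var(\p{y})$ and cannot remove the $|C|^2$ factor coming from $(p-q)^{-2}$.
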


\begin{proof}
First we show that $\hat{\p{v}} = PQ_{C,\epsilon}(\p{v})= \frac{1}{p-q}  \sum_{i=1}^{|C|}(\mathbf{1}_{\{\p{y} = \p{c_i} \} } - q)\p{c_i}$ is an unbiased estimator of $v$. From linearity of expectations, we have
\begin{align}\label{eq:rr1}
\E[\hat{\p{v}}] &= \frac{1}{p-q}  \sum_{i=1}^{|C|}( \Pr[\p{y} = \p{c_i}] - q)\p{c_i}, 
\end{align}
where, the expectation is taken over the randomness of both the quantization and \RR scheme.
Recall that 
\[
 \p{y} := \RR_p(Q_C(\p{v}) ,C) \in C, 
\]
where $p =\frac{e^{\epsilon}}{e^{\epsilon} +|C|-1}$. Therefore, 
\begin{align*}
    \Pr(\p{y} = \p{c_i}) &= \sum_{j = 1}^{|C|} \Pr[ \p{y}  =  \p{c_i} | Q_C(\p{v}) = \p{c_j} ] \cdot \Pr[Q_C(\p{v}) = \p{c_j} ] \\
    &=( p-q) a_i + q.
\end{align*}
Therefore $\E[\hat{\p{v}}] = \frac{1}{p-q}  \sum_{i=1}^{|C|} (p-q) a_i \p{c_i} = \p{v}$. 

Now we bound the variance of the estimator 
\begin{align*}&
\E[\| \p{v} -\hat{\p{v}} \|^2 ] =\E\left[ \|\sum_{i=1}^{|C|}\left(\frac{1}{p-q} (\mathbf{1}_{\{\p{y}= \p{c_i} \}} - q)  -a_i\right)\p{c_i} \|^2 \right]\\
 & \leq  \sum_{i=1}^{|C|}\E\left[ \left(\frac{1}{p-q} (\mathbf{1}_{\{\p{y} = \p{c_i} \}} - q)  -a_i \right)^2 \|\p{c_i}  \| ^2\right] \\
 & =\sum_{i=1}^{|C|} \textsc{Var}\left[ \left(\frac{1}{p-q} (\mathbf{1}_{\{ \p{y} = \p{c_i} \}} - q) \right) \|\p{c_i}  \| ^2 \right] \\
 & =\left( \frac{1}{p-q} \right)^2 \sum_{i=1}^{|C|} \textsc{Var}(\mathbf{1}_{\{\p{y} = \p{c_i} \}} ) \|\p{c_i}  \| ^2 \\
 & =O( |C|R^2), 
\end{align*}
since $\|c_i\|^2 \le R^2$ and $\textsc{Var}(\mathbf{1}_{\{ \p{y} = \p{c_i} \}}) \le 1/4$ .

\paragraph{Privacy} Now we show that our scheme is $\epsilon$ differentially private where $\epsilon$ is the input parameter to the \RR algorithm. For any two points  $\p{v},\p{w} \in B_d(\p{0}_d,1)$, 
\begin{align}
& \frac{PQ_{C, \eps}(\p{v})=y}{PQ_{C, \eps}(\p{w})=y}  = \frac{\sum_{i=1}^{|C| } \Pr(y| Q_C(\p{v})= \p{c_i} )\Pr( Q_C(\p{v})= \p{c_i})}{\sum_{j=1}^{|C| } \Pr(y | Q_C(\p{w})= \p{c_j} )\Pr( Q_C(\p{w})= \p{c_j})}\\
& \leq \frac{\max_{i}\Pr(y| Q_C(\p{v})= \p{c_i}) \sum_{i=1}^{|C| } \Pr( Q_C(\p{v})= \p{c_i})}{\min_{j}\Pr(y | Q_C(\p{w})= \p{c_j} )\sum_{i=1}^{|C| } \Pr( Q_C(\p{w})= \p{c_j})}\\
&=\frac{\max_{i} \Pr(y| Q_C(\p{v})= \p{c_i}) }{\min_{j}\Pr(y | Q_C(\p{w})= \p{c_j} )} \leq e^{\epsilon} \label{eq:rr4}
\end{align}
we are using the following privacy property of Randomized Rounding \cite{randomresponse} mechanism in Equation \eqref{eq:rr4}
\begin{align*}
\sup_{i,j} \frac{\Pr(y| Q_C(\p{v})= \p{c_i})}{\Pr(y | Q_C(\p{w})=\p{c_j})} \leq e^{\epsilon}\quad \forall\p{v},\p{w}
\end{align*}
\end{proof}

\subsection{Privacy using \rappor  }\label{sec:rappor}
In this section, we present an alternate mechanism to make the quantization scheme 
$\eps$-DP (for any $\eps>0$). 
The main idea is to use the \rappor mechanism (\cite{rappor}) over a 1-hot encoding of the indices of vertices in $C$.  
Though in doing so, we have to tradeoff on the communication a bit. 
Instead of sending $\log |C|$ bits, this scheme now requires one to send $O(|C|)$ bits to achieve privacy.

Recall that the quantization scheme  described in Section \ref{sec:quant}, $Q_C(\p{v})$, takes a vector $\p{v} \in B_d(\p{0}_d,1)$ and returns a point $\p{c_i}$ in $C$. 
We can interpret the output as the bit string $\p{b} \in \{0,1\}^{|C|}$ which is the indicator of the point $\p{c_i}$ in $C$ (according to some fixed arbitrary ordering of $C$). 
Note that this is essentially the $1$-hot encoding of $\p{c_i}$.  
In the RAPPOR scheme each bit of the 1-hot bit string $\p{b}$ is flipped independently with probability $p:= p(\eps)= \frac{1}{(e^{\eps/2} + 1)}$.

For any $\eps>0$, let $p =\frac{1}{(e^{\eps/2} + 1)}$. Define, the private quantization of a vector $\p{v} \in B_d(\p{0}_d,1)$ as
\[\hat{\p{v}} := PQ_{C, \eps}(\p{v}) = \frac{1}{(1-2p)}\sum_{j =1}^{|C|} \left(y_j - p \right)
\p{c_j}\] where, 
$\p{y} := \rappor_p(\onehot( Q_C(\p{v}) ,C)) \in \{0,1\}^{|C|}$.

We claim that the quantization scheme $PQ_{C,\eps}$ is $\eps$-differentially private. Moreover, adding the noise over the \onehot encoding maintains the unbiasedness of the gradient estimate but incurs a factor of $|C|$ in variance term while the communication cost is $O(|C|)$.

\begin{theorem}\label{thm:quantrappor}
Let $C \subset \R^d$ be any point set satisfying Condition~\eqref{cond:hull}. For any $\eps > 0$, let $p = \frac{1}{(e^{\eps/2} + 1)}$. 
For any $\p{v} \in B_d(\p{0}_d,1)$, let
 $\p{\hat{v}} := \frac{1}{1-2p}\sum_{j =1}^{|C|} \left(y_j - p \right)\p{c_j}$, 
where,  $\p{y}:=  \rappor_p(\onehot( Q_C(\p{v}) ,C))$.  Then,
$\E[\p{\hat{v}}] = \p{v}$  and $\E\left[ \| \p{v} - \p{\hat{v}} \|_2^2\right] = O(|C| R^2)$. 
Moreover, the scheme is $\eps$-differentially private.
\end{theorem}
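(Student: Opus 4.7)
The plan is to mirror the structure of the proof of Theorem~\ref{thm:quantrr}: first establish unbiasedness, then bound the variance, then invoke the standard privacy guarantee of the underlying \rappor mechanism together with the post-processing property of differential privacy. The only genuinely new piece is the variance calculation, since here the coordinates of $\p{y}$ have dependencies coming from two stages of randomness ($Q_C$ and the bit-flipping in \rappor), rather than one as in Theorem~\ref{thm:quantrr}.

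First I would dispatch unbiasedness. Let $b_j = \mathbf{1}\{Q_C(\p{v}) = \p{c_j}\}$, so $\Pr[b_j = 1] = a_j$ by Lemma~\ref{lem:unbiased}. By the \rappor flipping rule with probability $p = 1/(e^{\eps/2}+1)$, conditioning on $b_j$ gives $\E[y_j \mid b_j] = (1-p) b_j + p(1-b_j) = p + (1-2p) b_j$. Taking the outer expectation yields $\E[y_j] = p + (1-2p) a_j$, so $\E[y_j - p] = (1-2p) a_j$ and consequently $\E[\p{\hat{v}}] = \sum_j a_j \p{c_j} = \p{v}$.

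For the variance, I would write $\p{\hat{v}} - \p{v} = \frac{1}{1-2p}\sum_j (y_j - \E[y_j]) \p{c_j}$, so
\begin{align*}
(1-2p)^2 \, \E\bigl[\|\p{\hat{v}} - \p{v}\|_2^2\bigr]
   = \sum_{j} \Var(y_j)\|\p{c_j}\|_2^2 + \sum_{j\ne k} \mathrm{Cov}(y_j, y_k) \inprod{\p{c_j}}{\p{c_k}}.
\end{align*}
The two covariance quantities can be computed directly. Conditional on the one-hot vector $\p{b}$, the flips are independent, so a short case analysis using $b_j b_k = 0$ for $j\ne k$ gives $\mathrm{Cov}(y_j, y_k) = -(1-2p)^2 a_j a_k$, while $\Var(y_j) = p(1-p) + (1-2p)^2 a_j(1-a_j)$. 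Plugging these in, the off-diagonal terms combine with the $a_j(1-a_j)$ pieces of the diagonal to produce the clean identity
\begin{align*}
\E\bigl[\|\p{\hat{v}} - \p{v}\|_2^2\bigr] = \frac{p(1-p)}{(1-2p)^2} \sum_j \|\p{c_j}\|_2^2 + \sum_j a_j \|\p{c_j}\|_2^2 - \|\p{v}\|_2^2.
\end{align*}
Since $\|\p{c_j}\|_2 \le R$ for every $\p{c_j} \in C$ by Condition~\eqref{cond:hull}, the first term is at most $\frac{p(1-p)}{(1-2p)^2}|C|R^2$ and the second term is at most $R^2$, giving the desired $O(|C|R^2)$ bound (absorbing the $\eps$-dependent constant into the $O(\cdot)$).

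Finally, for $\eps$-DP: any two one-hot encodings $\onehot(Q_C(\p{v}))$ and $\onehot(Q_C(\p{v}'))$ drawn from $\{0,1\}^{|C|}$ differ in exactly two coordinates. The standard \rappor analysis (\cite{rappor}) shows that with per-bit flip probability $p = 1/(e^{\eps/2}+1)$, the ratio of output probabilities between any two inputs that differ in a single bit is at most $e^{\eps/2}$, so inputs differing in two bits give ratio at most $e^{\eps}$. Thus $\rappor_p \circ \onehot \circ Q_C$ is $\eps$-DP on the intermediate randomness, and since $\p{\hat{v}}$ is a deterministic function of $\p{y}$, the post-processing property of differential privacy preserves the guarantee. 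The main conceptual subtlety I expect to require care is checking that the negative inter-coordinate covariance really produces the telescoping cancellation claimed above; the rest is essentially bookkeeping.
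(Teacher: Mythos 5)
Your proof is correct and follows the same overall architecture as the paper's: unbiasedness via $\E[y_j] = p + (1-2p)a_j$, a variance bound of $O(|C|R^2)$, and privacy from the per-bit guarantee of \rappor applied to the one-hot strings, followed by post-processing. The one place you genuinely diverge is the variance step, and your version is the more careful one. The paper drops the cross terms on the grounds that the coordinates $y_j$ are ``mutually independent''; this is not literally true, since the $y_j$ are independent only conditionally on $Q_C(\p{v})$, and unconditionally $\mathrm{Cov}(y_j,y_k) = -(1-2p)^2 a_j a_k \neq 0$ for $j \neq k$. Your explicit computation of $\Var(y_j) = p(1-p) + (1-2p)^2 a_j(1-a_j)$ together with these covariances, yielding the identity $\E\left[\|\p{v}-\p{\hat{v}}\|_2^2\right] = \frac{p(1-p)}{(1-2p)^2}\sum_j \|\p{c_j}\|_2^2 + \sum_j a_j\|\p{c_j}\|_2^2 - \|\p{v}\|_2^2$, shows the off-diagonal contribution amounts to at most an additive $R^2$, so the final $O(|C|R^2)$ bound (with the same $\eps$-dependent constant hidden in the $O(\cdot)$) is unaffected; what your route buys is an exact expression and a repair of the paper's independence claim, at the cost of a short case analysis. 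On privacy, your two-bit Hamming-distance argument plus post-processing is the same in substance as the paper's bound via $\sup_{i,j} \Pr(y \mid Q_C(\p{v})=\p{c_i})/\Pr(y \mid Q_C(\p{w})=\p{c_j}) \le e^{\eps}$; just make the marginalization step explicit, namely that the output distribution for input $\p{v}$ is a mixture of the conditional \rappor distributions over the randomness of $Q_C$, so a pointwise ratio bound on the components carries over to the mixture --- this is exactly the max/min computation the paper writes out.
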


\begin{proof}
First we show that $\hat{\p{v}} = \frac{1}{(1-2p)}\sum_{j =1}^{|C|} \left(y_j - p \right)\p{c_j}$ is an unbiased estimator of $v$. From linearity of expectations, we have
\begin{align}\label{eq:rappor1}
\E[\hat{\p{v}}] &= \frac{1}{(1-2p)}\sum_{j =1}^{|C|} \left( \E[y_j] - p \right)\p{c_j}, 
\end{align}
where, the expectation is taken over the randomness of both the quantization and RAPPOR scheme.

Recall that 
\[
 \p{y} := \rappor_p(\onehot( Q_C(\p{v}) ,C)) \in \{0,1\}^{|C|}.
\]
Each entry of the vector $\p{y}$ is an  independent binary random variable and 
\begin{align}\label{eq:rappor2}
\E[y_j] & = \Pr(y_j=1) = \sum_{i=1}^{|C| } \Pr(y_j ,Q_C(\p{v})= \p{c_i} ) \nonumber \\
& = \sum_{i=1}^{|C| } Pr(y_j | Q_C(\p{v})= \p{c_i} )Pr( Q_C(\p{v})= \p{c_i}) \nonumber \\
& =Pr(y_j | Q_C(\p{v})= \p{c_j} )Pr( Q_C(\p{v})= \p{c_j})\nonumber \\
&+\sum_{i\neq j}^{|C| } Pr(y_j | Q_C(\p{v})= \p{c_i} )Pr( Q_C(\p{v})= \p{c_i}) \nonumber \\
&=(1-p)a_j + p(1-a_j)= p+(1-2p)a_j.
\end{align} 
Plugging Equation~\eqref{eq:rappor2} in Equation~\eqref{eq:rappor1} , we get 
\begin{align}
\E(\hat{\p{v}}) = \frac{1}{(1-2p)}\sum_{j =1}^{|C|} \left( p+(1-2p)a_j - p \right)\p{c_j} =\sum_{j =1}^{|C|}a_j\p{c_j}=\p{v}
\end{align}
Now we show a bound on the variance of the estimate 
\begin{align}
\E\left[ \| \p{v} - \p{\hat{v}} \|_2^2\right]& = \E\left[ \| \sum_{j =1}^{|C|}a_j\p{c_j} - \frac{1}{(1-2p)}\sum_{j =1}^{|C|} \left(y_j - p \right)\p{c_j}\|_2^2\right] \\
&=\sum_{j =1}^{|C|}\E \left( a_j -  \frac{(y_j-p)}{(1-2p)} \right)^2\left|\p{c_j} \right|^2\\
\intertext{( all the cross terms are $0$ as they are mutually independent and $\E \left( a_j -  \frac{y_j-p}{1-2p} \right)=0$)} 
&= \sum_{j =1}^{|C|} \Var\left(\frac{y_j-p}{1-2p} \right)\left|\p{c_j} \right|^2 \\ 
&=\left(\frac{1}{1-2p}\right)^2  \sum_{j =1}^{|C|} \Var(y_j) \left|\p{c_j} \right|^2= O(|C| R^2) \label{eq:rappor3}
\end{align}
Equation \eqref{eq:rappor3} comes form the fact that $y_j$ is a binary random variable and $Var(y_j)=Pr(y_j)(1-Pr(y_j)) \leq \frac{1}{4}$ and $\left|\p{c_j} \right|^2 \leq R^2$.

\paragraph{Privacy} Now we show that our scheme is $\epsilon$ differentially private where $\epsilon$ is the input parameter to the RAPPOR algorithm. For any two points  $\p{v},\p{w} \in B_d(\p{0}_d,1)$, 
\begin{align}
& \frac{PQ_{C, \eps}(\p{v})=y}{PQ_{C, \eps}(\p{w})=y}  = \frac{\sum_{i=1}^{|C| } \Pr(y| Q_C(\p{v})= \p{c_i} )\Pr( Q_C(\p{v})= \p{c_i})}{\sum_{j=1}^{|C| } \Pr(y | Q_C(\p{w})= \p{c_j} )\Pr( Q_C(\p{w})= \p{c_j})}\\
& \leq \frac{ \max_{i}\Pr(y| Q_C(\p{v})= \p{c_i}) \sum_{i=1}^{|C| } \Pr( Q_C(\p{v})= \p{c_i})}{\min_{j}\Pr(y | Q_C(\p{w})= \p{c_j} )\sum_{i=1}^{|C| } \Pr( Q_C(\p{w})= \p{c_j})}\\
&=\frac{ \max_{i} \Pr(y| Q_C(\p{v})= \p{c_i}) }{\min_{j}\Pr(y | Q_C(\p{w})= \p{c_j} )} \leq e^{\epsilon} \label{eq:rapor4}
\end{align}
By the privacy property of  RAPPOR \cite{rappor} mechanism , we are using  the following fact in equation \eqref{eq:rapor4}
\begin{align*}
\sup_{i,j} \frac{\Pr(y| Q_C(\p{v})= \p{c_i})}{\Pr(y | Q_C(\p{w})=\p{c_j})} \leq e^{\epsilon}\quad \forall\p{v},\p{w}
\end{align*}
\paragraph{Communication :} Now we show that for the RAPPOR based scheme the expected communication is   linear in $|C|$. Say $y $ is the output when RAPPOR is applied to one hot encoded binary string. Without loss of generality say the the bit string is $\p{e_i}$. The output $y$ is generated as follows
\[
\Pr(y_j=1) = \begin{dcases*}
p & if  $j\neq i$\\
(1-p) & if $j=i$
\end{dcases*}
\]
So the expected sparsity ($l_0 $ norm) of the output is 
\begin{align*}
\E[\|y\|_0]&= \sum_{i}^{|C|}y_i = (|C|-1)p+(1-p)\\
&=|C|p + (1-2p) = O(|C|)
\end{align*}
\end{proof} 


\section{Experiments}\label{sec:expe}
We use our gradient quantization scheme to train a fully connected ReLU activated network with $1000$ hidden nodes using the MNIST~\cite{lecun} and the Fashion MNIST~\cite{xiao2017/online} dataset  (60000 data points with 10 classes for each).  We use the \emph{cross-entropy loss} function for the training the neural network with a total of $d=795010$ parameters. 

The dataset is divided equally among $100$ workers. Each worker computes the local gradients and communicates the quantized gradient to the master which then aggregates and send the updated parameters.
 We plot the error at each iteration (Figure~\ref{comp}) and compare our results with QSGD quantization.  

We use vqSGD with cross polytope scheme, $Q_{C_{cp}}$, along with the variance reduction technique with repetition parameter $s=100$. Therefore, each local machine sends about $2060 = 100\cdot \log(2d)$ bits per iteration whereas, QSGD requires $3825.05$  bits for MNIST and $2266.79$ bits for Fashion MNIST, of communication per iteration per machine (computed by averaging over the total bits of communication over $50$ iterations) to communicate the quantized gradient. Our results indicate that vqSGD converges at a similar rate to QSGD while communicating much lesser bits. 

We also our vqSGD with the cross polytope scheme, $Q_{C_{cp}}$, to train a ReLU network with $4000$ hidden nodes using the CIFAR 10 dataset~\cite{krizhevsky2009learning}. This dataset also has 10 classes, every other set up is same except now we have $d=12332010$ parameters.

The dataset is again equally divided among 100 users. Using vqSGD, each machine send $2455$ bits per iteration using the variance reduction scheme. On the other hand, for QSGD,  the number of bits per machine per iteration is $4096.9$ (computed by averaging over the total bits of communication over $50$ iterations). As is evident from the plot in Figure~\ref{comp}, vqSGD communicates lesser number of bits to achieve similar performance.

\begin{figure}[h!]
  \centering
     \subfloat[MNIST]{{\includegraphics[width=0.32\textwidth]{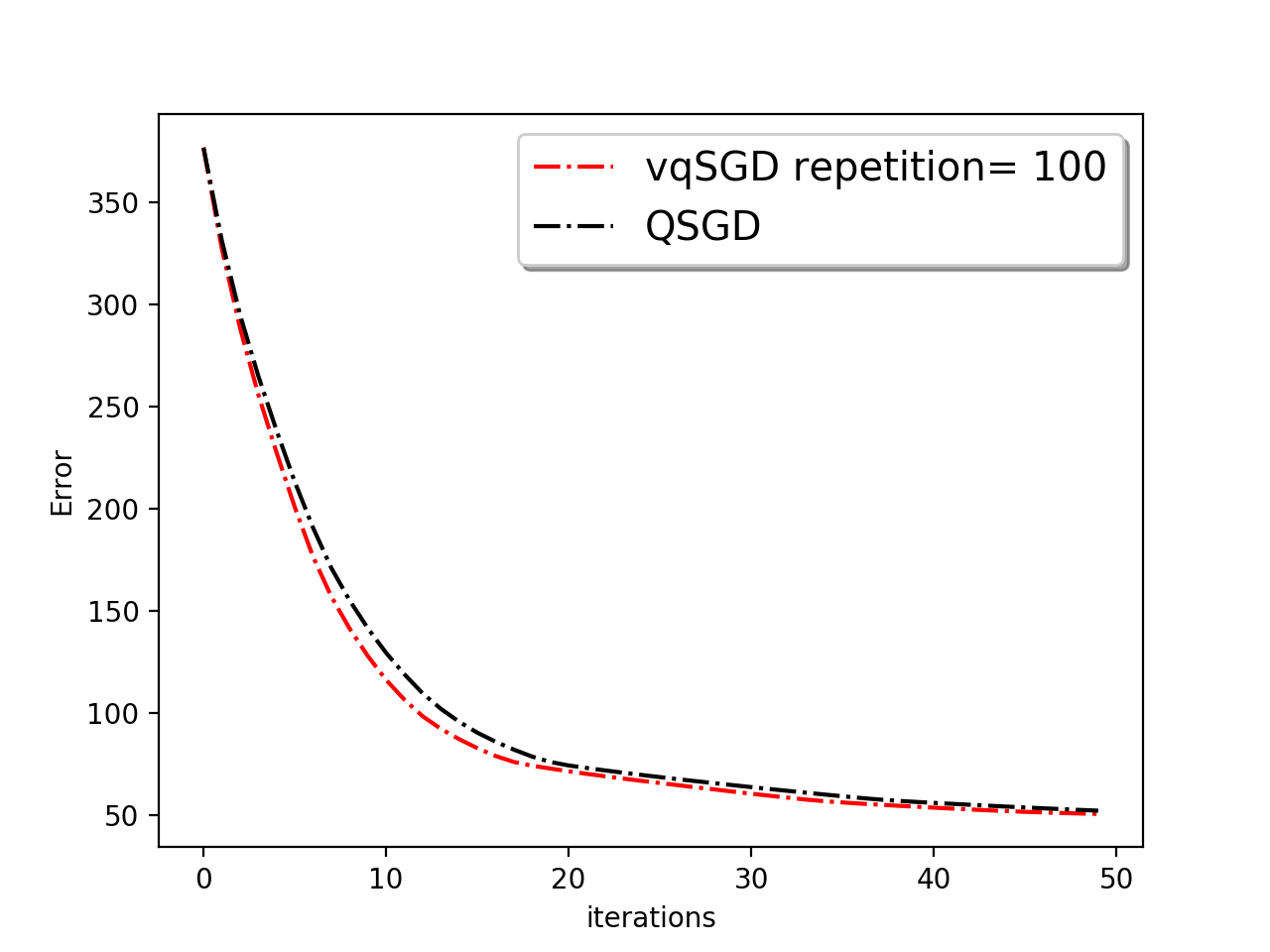} }}
     \subfloat[Fashion MNIST]{{\includegraphics[width=0.32\textwidth]{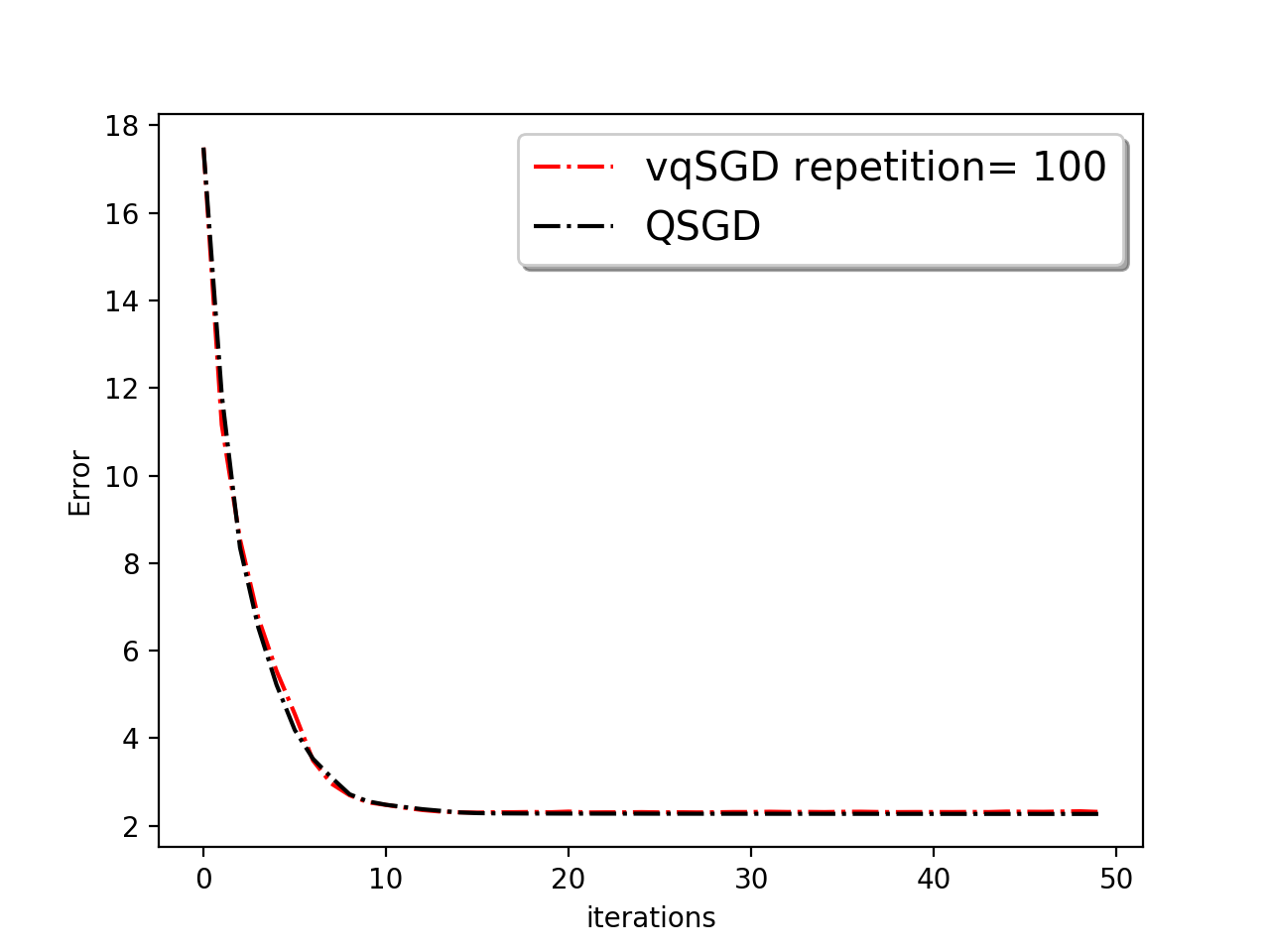} }}
      \subfloat[CIFAR10]{{\includegraphics[width=0.32\textwidth]{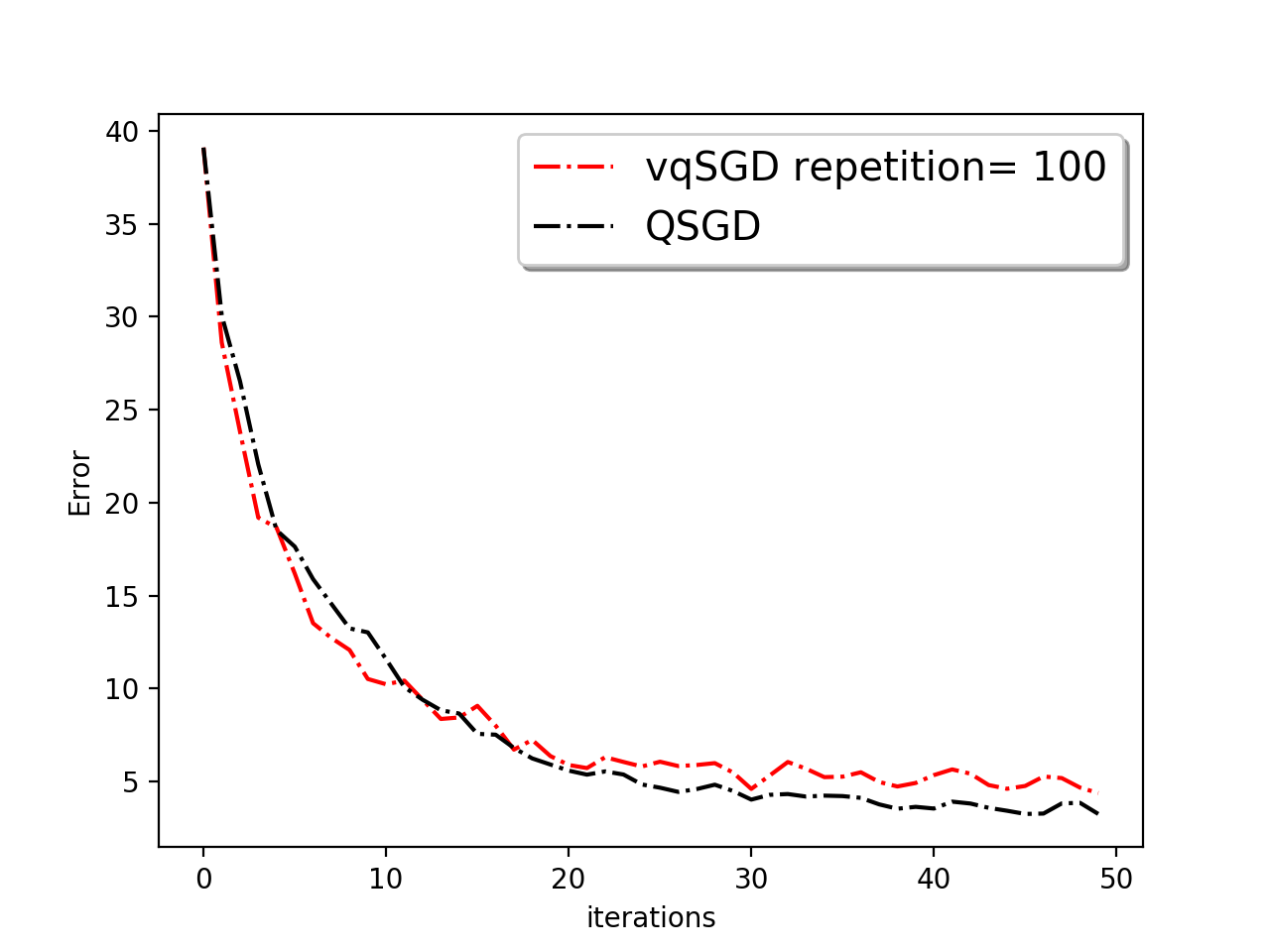} }}
    \caption{Convergence for fully connected ReLU network compared to QSGD  }
    \label{comp}
\end{figure}

Further we experimentally 
show the  performance of vqSGD using the cross polytope $Q_{cp}$, to solve the least squares problem and logistic regression for binary classification.

\textbf{Least Squares:} In the least square problem, we solve for $\p{\theta^*} =    \argmin_{\p{\theta}} \|A \p{\theta} -\p{b} \|_2^2$, where the  matrix $A\in \mathbb{R}^{n\times d}$ and $\p{\theta^*} \in \mathbb{R}^d$ are generated by sampling each entry from $\mathcal{N}(0,1)$ and we set $\p{b}=A \p{\theta^*}$.  

In order to show the performance of vqSGD, we simulate the iterations of distributed SGD with $n=10000$ data samples distributed equally among $N=500$ worker nodes. In every iteration of SGD, each worker node computes the local gradient on individual data batch and communicates the quantized version of the local gradient to the parameter server. 
The parameter server on receiving all the quantized gradients averages them and broadcasts the updated model to all the workers. 
The convergence of SGD is measured by the error term $\| \p{\theta^*} -\p{\theta_t} \|_2$, where $\p{\theta_t}$ is the computed parameter at the end of $t$-th iteration of distributed SGD.

We compare the convergence of the least square problem for $d=100,200,500$ against the state-of-the-art quantization schemes  - DME \cite{dme} and QSGD \cite{qsgd}. The results are presented in  Figure~\ref{Compare}.  

\begin{figure*}[h!]%
    \centering
    \subfloat[d=100]{{\includegraphics[width=0.30\textwidth]{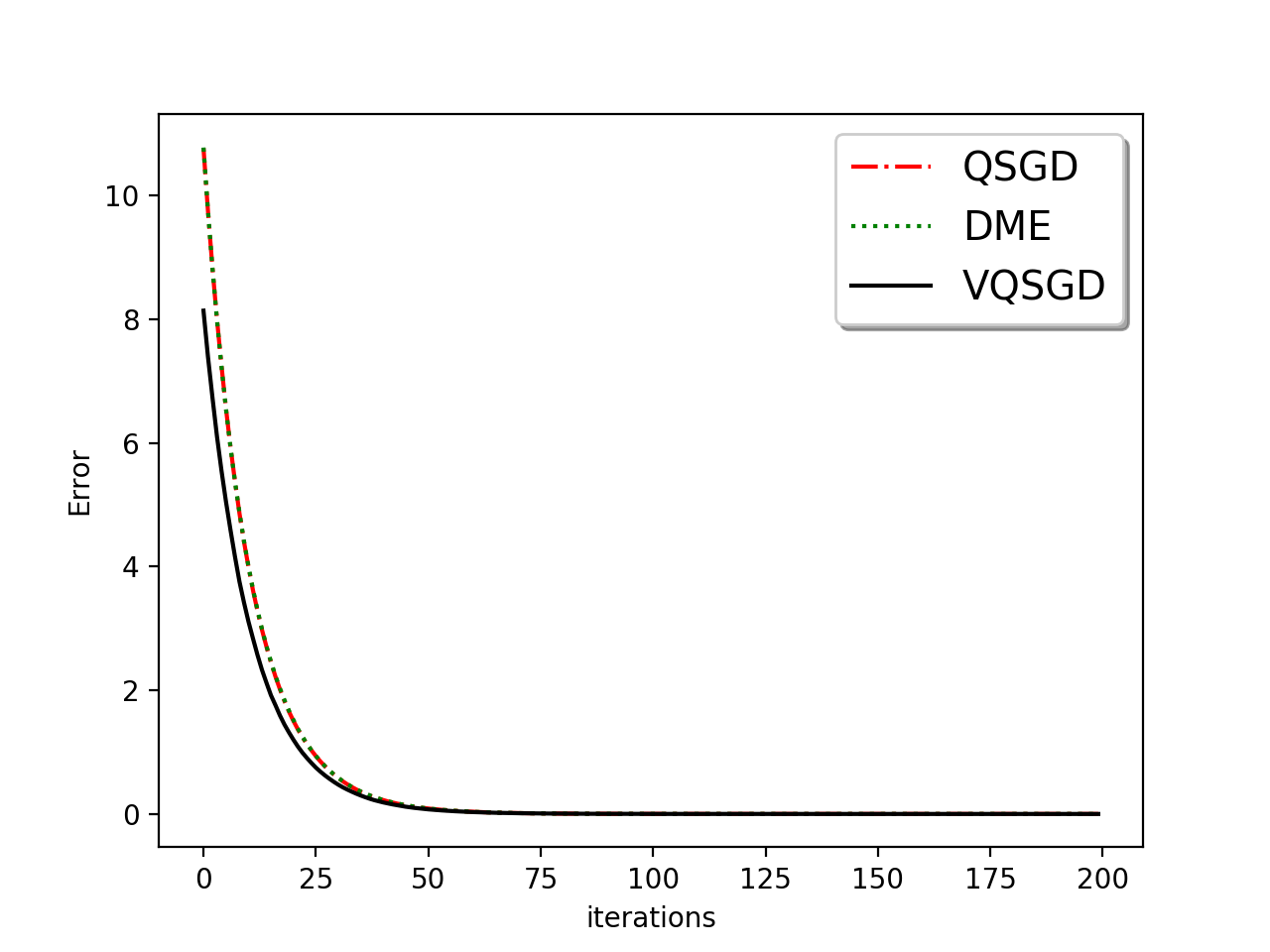} }}%
    \quad
    \subfloat[d=200]{{\includegraphics[width=0.30\textwidth]{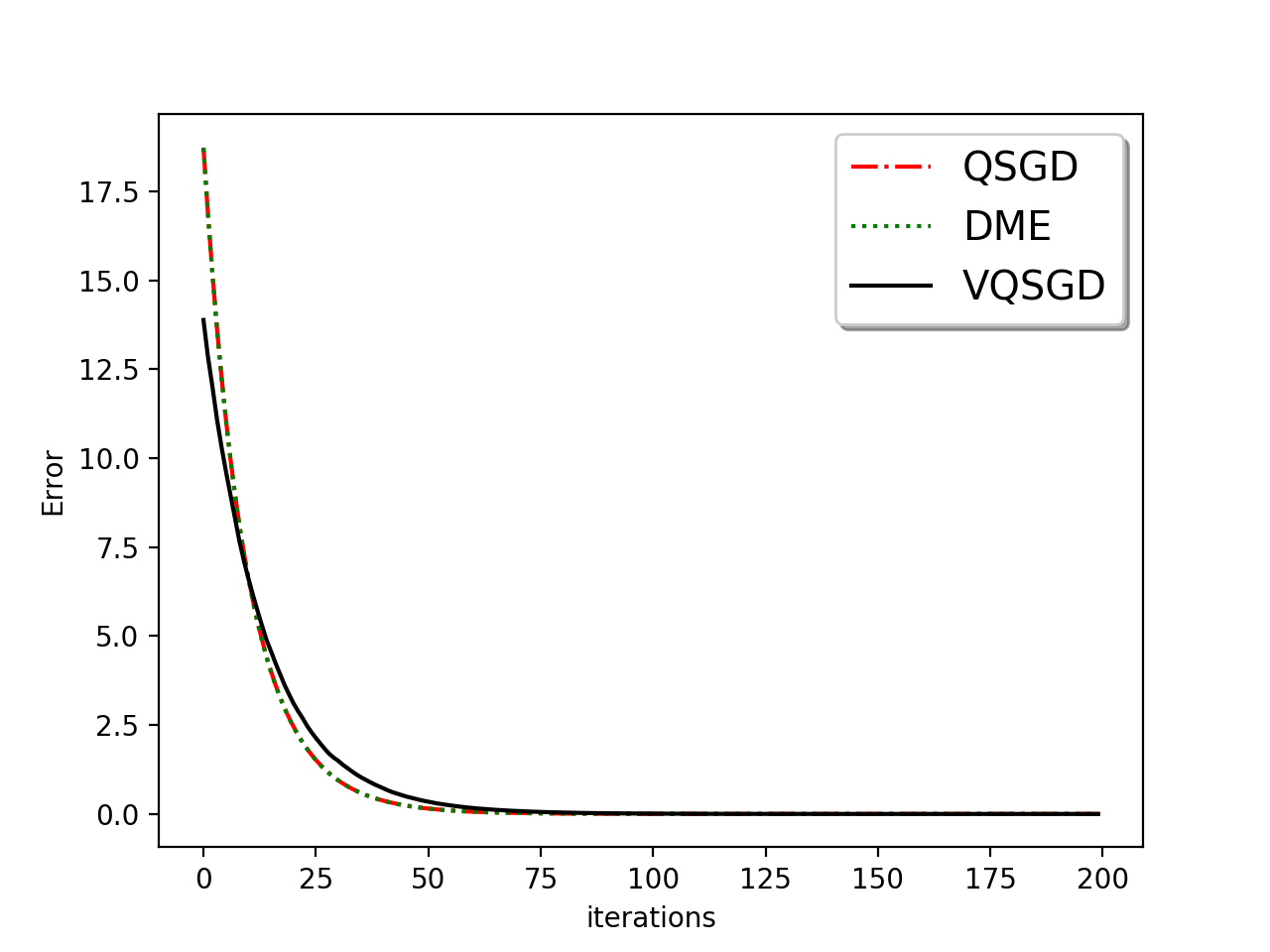} }}%
     \quad
    \subfloat[d=500]{{\includegraphics[width=0.30\textwidth]{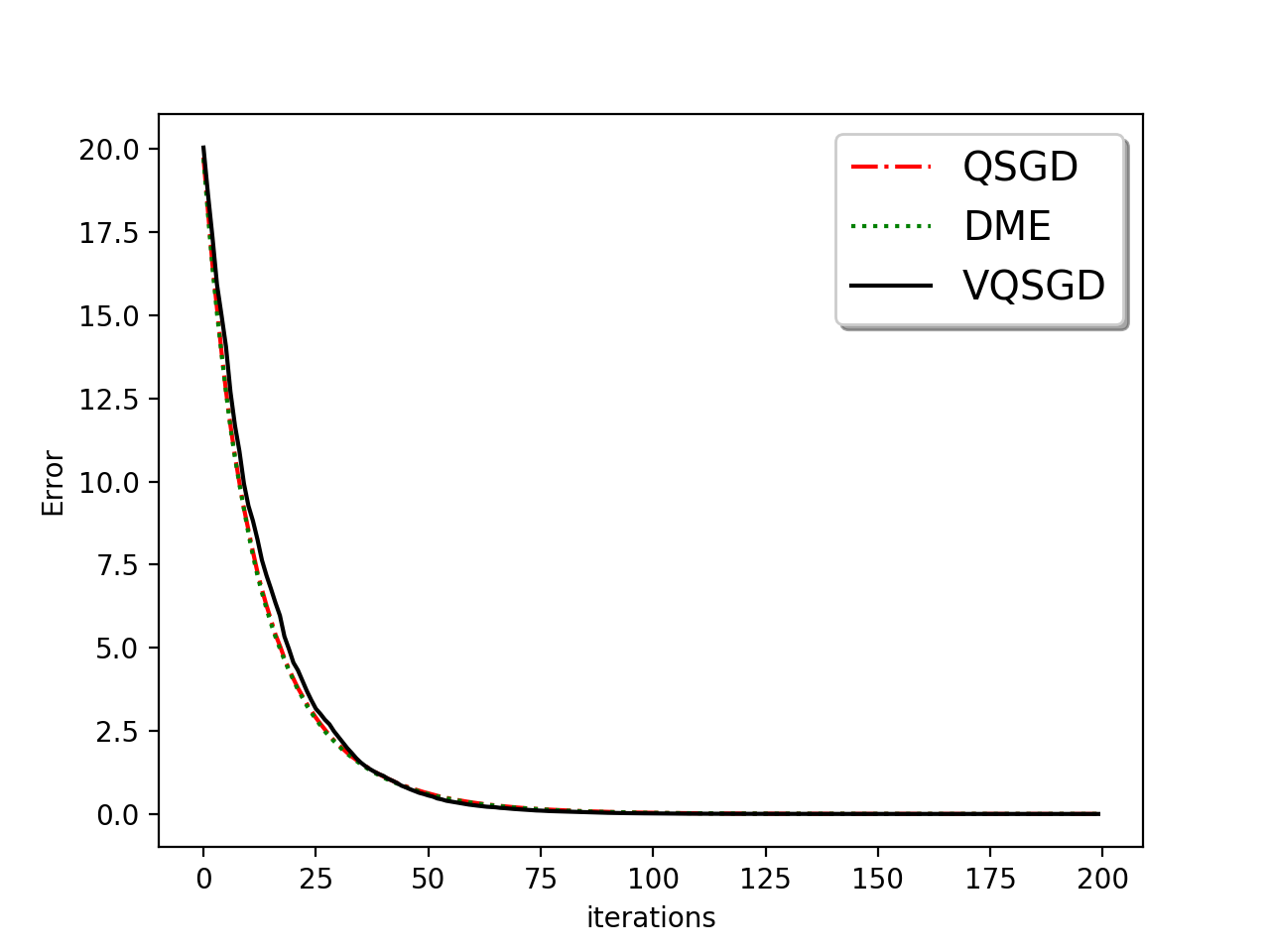} }}
    \caption{Comparison of convergence for the least square problem with $d=100, 200,500$.}
    \label{Compare}%
\end{figure*}

The results indicate that vqSGD achieves the same rate of convergence and accuracy as DME and QSGD while communicating only $\log(2d)$ bits and one real ($l_2$ norm of the vector form each server), whereas, DME (one bit stochastic quantization) and QSGD both require communication of about $\sqrt{d}$ bits and one real.

\begin{figure}[h!]
  \begin{center}
    \includegraphics[width=0.45\textwidth]{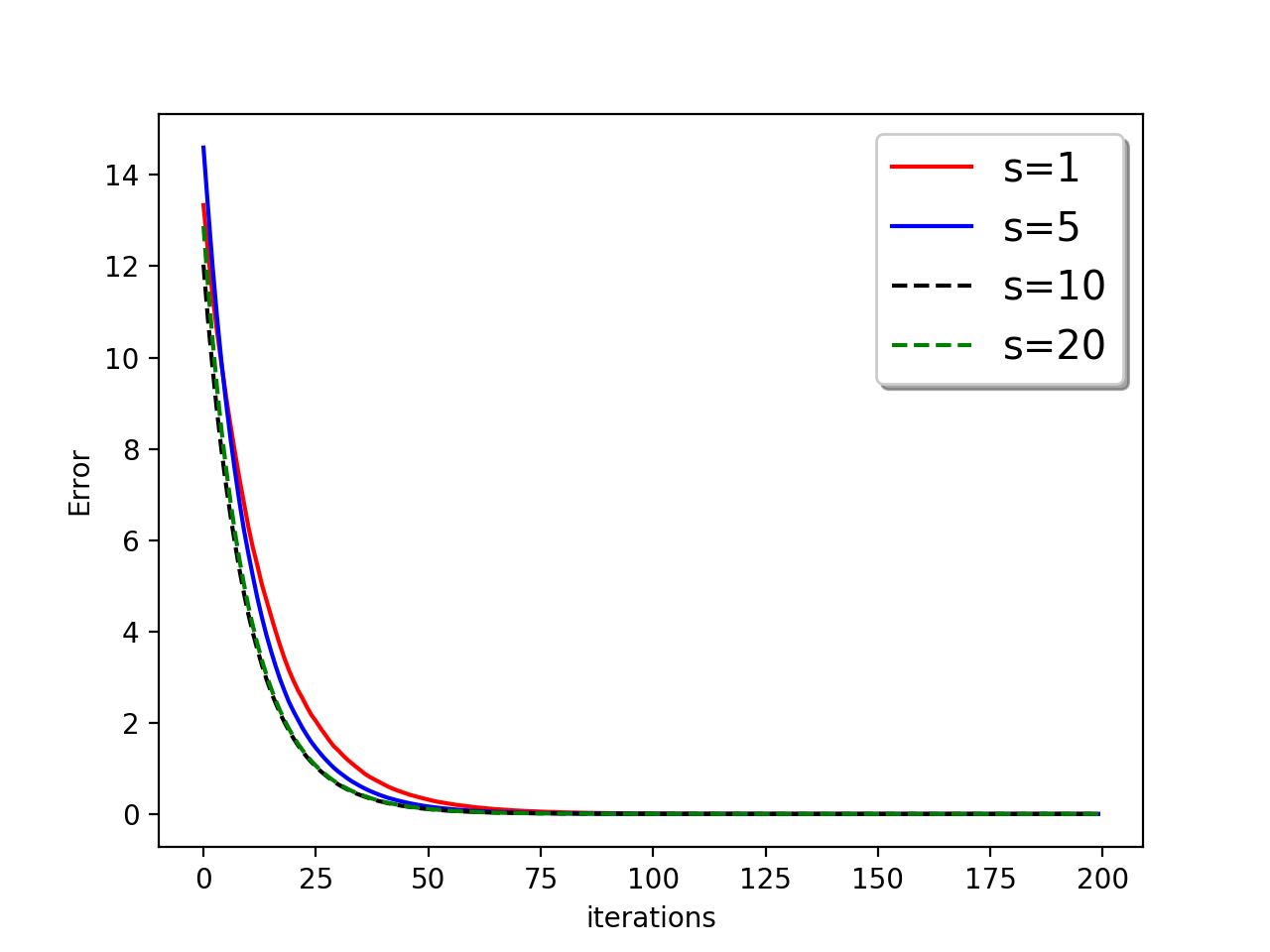}
    \caption{Convergence of $\p{\theta_t}$ for $s=1,5,10,20$.} for least square problem
    \label{compsd}
    \end{center}
\end{figure}
For the same problem setup, we also show the improvement in the performance of vqSGD using the repetition technique for variance reduction.  Recall that using repetition technique, each worker now sends $s$ different indices instead of $1$ which increases the communication to $s \log(2d)$ bits and $1$ real. 
In Figure~\ref{compsd} we plot the convergence of the lease square problem with $d=200$ with different values of $s = 1, 5,10,20$. We see the evident improvement in the  convergence of vqSGD using this repetition scheme with increasing $s$.

\textbf{Binary Classification:} 
We compared  the performance of vqSGD against DME and QSGD
for the binary classification problem with logistic regression using various datasets from the UCI repository~\cite{chang2011libsvm}. The logistic regression objective is defined as 
\begin{align}
\frac{1}{n}\sum_{i=1}^n \log(1+\exp(-b_i\p{a_i}^T\p{\theta}) +\frac{1}{2n}\|\p{\theta}\|^2_2, \label{logreg}
\end{align}
where $\p{\theta} \in \mathbb{R}^d$ is the parameter, $\p{a_i} \in \mathbb{R}^d$ is the feature data and $b_i\in \{-1,+1\}$ is its corresponding label. 

We partition the data into 20 equal-sized batches, each assigned to a different worker node. 
We calculate the classification error for different (test) datasets after training the parameter in the distributed settings (same as described in least square problem). 
Results of the experiments are presented in Table \ref{tab:UCI}, where each entry is  averaged over $20$ different runs.  

 \begin{table*}[h!]
    \centering
    \begin{tabular}{|l|c|c|c|}
    \hline
    Method  & DME   &  QSGD & vqSGD   \\
    \hline 
  a5a $(d=122)$ &$ 0.238\pm 0.0003 $&$ 0.238\pm 0.0002$& $0.2368 \pm 0.0029$ \\
  \hline
  a9a $(d=123)$ &$0.234\pm 0.0003 $&$0.234\pm 0.00017$ &$ 0.234 \pm 0.0015$\\
  \hline
  gisset-scale $(d=5000)$&$0.0947 \pm 0.00384$&$0.10475 \pm 0.006$&$0.1480\pm 0.0174$\\
  \hline
  splice $(d=60)$&$0.467\pm 0.017$&$0.4505\pm 0.0352$&$0.16618\pm 0.0054$\\
  \hline 
    \end{tabular}
  \caption{ Comparison in classification error (mean$\pm$ standard deviation) for various UCI datasets}
    \label{tab:UCI}
\end{table*}

We note that for most datasets, with the exception of  gisset-scale, vqSGD with $O(N \log d)$ bits of communication per iteration performs equally well or sometimes even better than QSGD and DME with $O(Nd)$ bits of communication per iteration.

\section{Conclusion}
We propose a general framework of convex-hull based private vector quantization schemes for distributed SGD that can be instantiated with any point set satisfying certain properties. 
The communication, variance and privacy tradeoffs for these mechanisms depend on the choice of point set. 
The proposed cross-polytope quantization scheme with low communication overhead is shown experimentally to achieve convergence rates similar to the existing state-of-the-art quantization schemes which use orders of magnitudes more communication. While the explicit efficient schemes seems to have a $\log d$-factor communication overhead, we believe it will be hard but interesting to get rid of the factor with deterministic construction.

Information theoretically, we are asking the question of computing the variance of an unbiased estimator of points in the unit sphere, in terms of its unconditional entropy. We have established the exact trade-off between variance and entropy for almost surely bounded estimators. We, in this paper, have tried to minimize the communication: but we believe our techniques will be applicable to variance reduction techniques as well - at the expense of $\Omega(d)$ communication.

\bibliographystyle{plain}
\bibliography{ref}

\appendix

\end{document}